\newtheorem{theorem}{Theorem} 
\newtheorem{proposition}[theorem]{Proposition}
\newtheorem{definition}[theorem]{Definition}
\newtheorem{remark}[theorem]{Remark}
\newtheorem{lemma}[theorem]{Lemma}
\newtheorem{corollary}[theorem]{Corollary}
\DeclareMathAlphabet{\mathpzc}{OT1}{pzc}{m}{it}
\DeclareMathOperator*{\argmin}{argmin}
\DeclareMathOperator{\Var}{Var}
\DeclareMathOperator{\cX}{\mathscr{X}}
\DeclareMathOperator{\cF}{F}
\DeclareMathOperator{\cD}{\mathscr{D}}
\DeclareMathOperator{\cT}{\mathscr{T}}
\DeclareMathOperator{\cA}{\mathscr{A}}
\DeclareMathOperator{\cz}{\mathpzc{z}\,}
 \DeclareMathOperator{\RR}{\mathbb{R}}
\DeclareMathOperator{\NN}{\mathbb{N}}
 \DeclareMathOperator{\Unif}{Unif}
\DeclareMathOperator{\Rad}{\text{Rad}}
 \newcommand{\eqdef}{:=}
\newcommand{\norm}[1]{\left\| #1 \right \|}
\newcommand{\dprod}[1]{\left\langle #1 \right\rangle}
\newcommand{\Prob}[1]{~\mathbb{P}\left[ #1 \right ]}
\newcommand{\Exp}[1]{~\mathbb{E}\left[ #1 \right ]}
\newcommand{\ExpD}[2]{~\mathbb{E}_{#1}\left[ #2 \right ]}
\newcommand{\tExpD}[2]{~\mathbb{E}_{#1}[ #2 ]}
\newcommand{\tExp}[1]{~\mathbb{E}[ #1  ]}
\newcommand{\tnorm}[1]{\| #1  \|}
\newcommand{\absv}[1]{\left| #1  \right|}
\newcommand{\E}{\mathbb E}
\newcommand{\fF}[1]{F^{#1}}
\newcommand{\nFf}[1]{\nabla F^{#1}}
\DeclareMathOperator{\dD}{\mathfrak{D}}
 \newcommand{\iid}{\scriptstyle{\mathrm{iid}}}
\renewcommand{\epsilon}{\varepsilon}
\renewcommand{\leq}{\leqslant}
\renewcommand{\geq}{\geqslant}
\title{Between Stochastic and Adversarial Online Convex Optimization: Improved Regret Bounds via Smoothness}
 \author{
 Sarah Sachs\\
 University of Amsterdam\\
Korteweg-de Vries Institute for Mathematics\\
 \texttt{s.c.sachs@uva.nl}
 \And 
 H\'edi Hadiji \\
  University of Amsterdam\\
Korteweg-de Vries Institute for Mathematics\\
 \texttt{hedi.hadiji@gmail.com}
 \And
  Tim van Erven\\
   University of Amsterdam\\
Korteweg-de Vries Institute for Mathematics\\
 \texttt{tim@timvanerven.nl}
   \And
    Crist\'obal Guzm\'an\\
    University of Twente\\
    Department of Applied Mathematics\\
    Pontificia Universidad Cat\'olica de Chile\\
     Institute for Mathematical and Computational Eng.\\
      \texttt{c.a.guzmanparedes@utwente.nl}
 }
\date{7. June 2022}
\begin{document}

\maketitle

\begin{abstract}
  Stochastic and adversarial data are two widely studied settings in online learning. But many optimization
tasks are neither i.i.d.~nor fully adversarial, which makes it of  fundamental interest to get a better theoretical understanding of the world between these extremes.
 In this work we establish novel regret bounds for online convex
 optimization in a setting that interpolates between stochastic
 i.i.d.~and fully adversarial losses. By exploiting smoothness of
 the expected losses, these bounds replace a dependence on the maximum
 gradient length by the variance of the gradients, which was previously
 known only for linear losses. In addition, they weaken the i.i.d.\
 assumption by allowing, for example, adversarially poisoned rounds,
 which were
 previously
 considered in the expert and bandit setting. Our results extend this to the online convex optimization framework.  
  In the fully i.i.d.\ case, our bounds match the rates one would expect
 from results in stochastic acceleration, and in the fully adversarial
 case they gracefully deteriorate to match the minimax regret.    
We further provide lower bounds showing that our regret upper bounds are
tight for all intermediate regimes in terms of the stochastic variance and the
adversarial variation of the loss gradients.
\end{abstract}

 \section{Introduction}
  
 Two of the main approaches for solving convex optimization problems under uncertain data are stochastic
 convex optimization (SCO) \cite{doi:10.1137/1027074,10.5555/2678054}
 and online convex optimization (OCO) \cite{zinkevich2003online}.
 These two models are very different in their assumptions and goals,
 despite the fact that they share many techniques. In SCO it is assumed
 that the
 loss functions follow an independent, identically distributed
 (i.i.d.)~process, and the goal is to minimize the {\em excess risk}, which is the optimization error under the expected loss. By contrast, in OCO the losses can be choosen adversarially and the goal is to minimize the {\em cumulative regret}, which is the difference between the cumulative incurred losses over rounds against the best fixed strategy in hindsight.
 Much less is known about what happens in between, in scenarios that
 interpolate between the i.i.d.~and adversarial settings. This intermediate setting has drawn major attention in the recent years in the expert and bandit setting \cite{ito2021on-optimal} \cite{amir2020prediction} \cite{pmlr-v89-zimmert19a}, however, as mentioned in \cite{ito2021on-optimal},  little is known for online convex optimization. 
 Our work studies this
 in a generalization of the OCO setting, in which  nature
 chooses
 distributions for the data that may vary arbitrarily over time, and we
 provide regret bounds in terms of two quantities that measure 
how adversarially these distributions are. The standard OCO setting
 corresponds to the case where the distributions are point-masses on
 adversarial data points.

 \paragraph{Main Contribution.} Our main contribution is a new analysis
 of {\em optimistic online algorithms} \cite{rakhlin2013online,10.5555/2999792.2999954} 
 that takes advantage of smoothness of the expected loss. This analysis
 allows for a gradual interpolation between
 worst-case adversarial regret bounds and the  best known expected regret
 bounds in the stochastic case, and also provides quantifiable
 improvements for intermediate cases.\footnote{It is well-known that in
 the fully adversarial case smoothness does not yield asymptotic
 improvements on regret \cite{HazanOCO:2016}, whereas for SCO
 improvements can be obtained only under low-noise
 \cite{ghadimi2012optimal}.}
 To capture the full range between  i.i.d.~and fully adversarial
 settings, we consider a similar adversarial model as in
 \cite{rakhlin2011online}, i.e., nature chooses distributions $\cD_t$ in
 iteration $t$, and the learner suffers loss $f_t(x_t, \xi_t)$ with
 $\xi_t \sim \cD_t$. Importantly, we do not assume that the
 distributions $\cD_t$ are all the same, but they may vary
 adversarially over time.
 
  To properly quantify the interpolation between the i.i.d.~and fully
 adversarial settings in the regret bound, we introduce two parameters
 for the loss sequence. Namely the \emph{cumulative variance}, $
 \overline{\sigma}_T  $, which captures the stochastic aspect of the
 learning task, i.e., the variance of  the $\cD_t$; and the
 \emph{cumulative adversarial variation}, $\overline{\Sigma}_T$, which
 captures the adversarial difficulties of the data, i.e., the difference
 between $\tExpD{\xi \sim \cD_t}{\nabla f(\, \cdot\,,\xi)}$ and
 $\tExpD{\xi \sim \cD_{t-1}}{\nabla f(\, \cdot\,,\xi)}$. With these two
 key quantities, our first main result in Theorem~\ref{thm:VarStepsize}
 shows that the expected regret $ \tExp{R_T(u)}$, that is the difference of the cumulative losses of the learner and a fixed solution in hindsight, is bounded by
  \begin{equation}\label{eqn:generalconvexmainresult}
 \Exp{R_T(u)} = O\!\left(D(\bar \sigma_T + \bar \Sigma_T)\sqrt{T} + LD^2\right),
 \end{equation}
  where $L$ is the smoothness constant of the expected functions
  $\fF{t} = \ExpD{\xi \sim \cD_t}{f(\, \cdot\,,\xi)}$. If, in addition, the functions
  $\fF{t}$ are $\mu$-strongly convex, then in Theorem \ref{theorem:scRegretbound} we obtain
 \begin{align*}
 \E[R_T(u)] = O\!\left( \frac{1}{\mu}
 \left(\sigma^2_{\max}+ \Sigma^2_{\max} \right) \log T + LD^2 \kappa \log \kappa\right).
 \end{align*}
 Both bounds are tight: we prove 
 matching lower bounds in Theorems \ref{theorem:regretLsmoothLB} and \ref{sc:lowerbound}.  
  In Section \ref{sec:convexsmooth} we show that our results match the
  known adversarial regret bounds as well as the best results in the
  i.i.d.\ case. For the latter, only the linear case was so far obtained
  directly via a regret analysis (see Sec.~5.2 in
  \cite{rakhlin2013online}, and prior work \cite{Hazan2010ExtractingCF,pmlr-v23-chiang12}). Using optimistic mirror descent, they
  obtained the regret guarantee of
  $R_T(u) \leq O(\sqrt{\sum_{t=1}^T\tnorm{\nabla f(x_t,\xi_t) -
  m_t}^2})$, where $m_t$ denotes an optimistic guess of the gradient
  that is chosen before round $t$.
  In the i.i.d.\ case with the prediction $m_t = \nabla f(x_{t-1},\xi_{t-1})$, this can be shown to imply
  that the expected regret is upper bounded  by $\E[R_T(u)] = O(\sigma \sqrt{T} +
  \sqrt{\sum_{t=1}^T \tExp{\tnorm{\nFf{t}(x_t) -
  \nFf{t}(x_{t-1})}^2}})$, where $\sigma$ denotes the variance of the
  stochastic gradients. This simplifies to
  \[
  \E[R_T(u)] = O(\sigma\sqrt{T})
  \qquad \text{for the i.i.d. case with linear functions $\nFf{t}$,} 
  \]
  which is a special case of \eqref{eqn:generalconvexmainresult},
  because $\sigma = \bar \sigma_T$ and $\bar \Sigma_T = 0$ for i.i.d.\
  losses, and
  $\nFf{t}(x_t) =
  \nFf{t}(x_{t-1})$ and $L=0$ for linear functions.
  It is not immediately obvious how to generalize this
  result to general convex functions with smoothness $L > 0$, however.
  In this case, we can guess the appropriate regret bound based on 
  known convergence results for \emph{stochastic
  accelerated gradient descent} (SAGD)
  \cite{AccStochOptDM,ghadimi2012optimal}: if we knew in advance
  that the losses would be i.i.d.\ and we did not care about
  computational efficiency, then we could run a new instance of SAGD for
  each round $t$. Summing the
  known rate for SAGD over $t$ then gives
   $\tExp{R_T(u)} \leq O(\sigma \sqrt{T} + L)$ (for more details see the batch-to-online conversion in  Appendix~\ref{appendixBatchToOnline}). 
  This raises
  the question if a similar bound can also be obtained directly via a regret
  analysis, without assuming i.i.d.\ observations in advance. This
  question is then answered by our result
  \eqref{eqn:generalconvexmainresult}, which indeed reduces to this rate
  for general convex smooth i.i.d.\ functions, matching the aforementioned bound up to constants. We achieve this by using smoothness to bound $ \tExp{\tnorm{\nFf{t}(x_t) -
  \nFf{t}(x_{t-1})}^2} \leq L \tExp{\tnorm{x_t - x_{t-1}}^2}$, which can
  be canceled by a negative quadratic term that we obtain from an
  improved analysis of the regret. 
   The use of this negative term in the analysis dates back to
   \cite{Nemirovski:2004}, who used it to achieve an improved $O(1/T)$ rate on the extra-gradient method.  
   
    In addition to unifying the analysis of these two extreme cases and obtaining the best known results via one algorithm (OFTRL \eqref{Algo:convex} for convex functions and OFTRL on a surrogate loss \eqref{Algo:sc} for strongly convex functions), our results give a new insight for intermediate cases. Thus, as a second main contribution we 
    shed light on a setting which is neither fully adversarial nor i.i.d.. To illustrate this, we highlight some examples here, which  received attention in the recent literature. 
  \paragraph{Adversarial corruptions: } Consider i.i.d.\ functions with
  adversarial corruptions, as considered in the context of the expert
  and bandit settings in \cite{ito2021on-optimal},
  \cite{amir2020prediction}. If the (cumulative) corruption level is
  bounded  by a constant $C$, in \cite{ito2021on-optimal}  an expected
  regret bound of $\Exp{R_T(u)} = O(R^s_T + \sqrt{C R^s_T})$ was obtained, where
  $R^s_T$ denotes the regret with respect to the uncorrupted data.  In
  \cite{ito2021on-optimal}, the authors raised the question of whether it is possible to obtain regret bounds with a similar square-root dependence on the corruption level $C$ for online convex optimization. Indeed, for this 
   intermediate model, we derive a regret bound
  \begin{align*}
 \Exp{R_T(u)} \leq O(R_T^s + D\sqrt{GC}),
  \end{align*}
   for the general convex case from our Theorem \ref{thm:VarStepsize}. We elaborate on this in Section~\ref{sec:perturbed}.  
  
  \paragraph{Random order models} The random order model (ROM) dates
  back to \cite{Kenyon97best-fitbin-packing} in combinatorial online
  learning. It has drawn attention in the online convex optimization community as an elegant relaxation of the adversarial model \cite{sherman2021optimal,pmlr-v119-garber20a}. 
  Complementary to the results in \cite{sherman2021optimal}, we show that the dependence on $G$ in the regret bound can be reduced to a dependence of $\sigma$, where $\sigma$ denotes the variance of gradients in the uniform distribution over 
  loss functions $f_1, \dots f_T$. That is,
  \begin{align*}
  \Exp{R_T(u)} \leq O\bigg(D \sigma \sqrt{T \log\Big( e \frac{\widetilde \sigma}{\sigma}\Big) } \, \bigg),
  \end{align*}
  where $\widetilde\sigma$ denotes a slightly weaker notion of variance (see Corollary \ref{cor:rom}).
  We derive these results from our main theorem under  stronger
  assumptions than in \cite{sherman2021optimal}, but we also obtain a
  better rate with $\sigma$ instead of $G$ as the leading factor, so the
  results are not directly comparable.
 We also consider a variant of the random order model, which we call the \emph{multiple pass random order model (multi-pass ROM)}. This is inspired by multiple shuffle SGD and can be considered another intermediate example between adversarial and stochastic data. We elaborate on both examples, i.e., the ROM and  multi-pass ROM in Section~\ref{sec:ROM}.

 \subsection{Related work}
  As mentioned in the previous section, our work is inspired by results in the \emph{gradual variation} and in the \emph{stochastic approximation} literature. 
  The gradual variation literature dates back to
 \cite{Hazan2010ExtractingCF}, with
 later extensions by   \cite{pmlr-v23-chiang12} and
 \cite{rakhlin2013online,10.5555/2999792.2999954}. 
 In addition to some technical relation to the aforementioned work, there is also a natural relation between our parameters $\bar \sigma_T$ and $\bar \Sigma_T$ to variational parameters in \cite{Yang_2013}, \cite{Hazan2010ExtractingCF} or \cite{pmlr-v23-chiang12}. However, as we elaborate in Remark \ref{rem:param}, there are some fundamental differences between these variational parameters and $\bar \sigma_T, \bar \Sigma_T$, which prevent us from directly obtaining a smooth interpolation from these results. It is also interesting to note that there is some relation between $\bar \Sigma_T$ and the \emph{path length} parameters considered in \emph{dynamic regret} bounds   \cite{zhao2021adaptivity,NEURIPS2020_93931410}. However, since their analysis targets a fundamentally different notion of regret, namely the  dynamic regret, the results are incomparable. 
 
With respect to the results, our findings are fundamentally different from the stochastic approximation literature, since we do not rely on the assumption that the data is following a distribution. However,  we were inspired by analysis techniques and the convergence thresholds set by  this literature.  
Our approach of obtaining accelerated rates by negative terms arising
 from smoothness in a regret bound has previously been used in the
 context of variational inequalities and saddle-point
 problems. Using this idea, \cite{Nemirovski:2004} obtained improved
 rates $O(1/T)$ for the extra-gradient method.  More recently
 \cite{AccStochOptDM} showed that acceleration in stochastic convex
 optimization  can benefit by negative terms arising in optimistic FTRL
 via an anytime-online-to-batch conversion \cite{Cutkosky2019AnytimeOC}.
 Although an important inspiration for our approach, the techniques of
 \cite{AccStochOptDM} do not directly carry over, because they evaluate
 gradients at the time-average of the algorithm's iterates, making them
 much more stable than the last iterate, which comes up when controlling
 the regret.  
  Algorithms used both in SCO and OCO follow a vast literature on stochastic approximation methods, e.g.~\cite{RobbinsMonro:1951, doi:10.1137/1027074, Polyak:1992}. For this work, we are particularly interested in
   the more recent literature on acceleration in SCO 
 \cite{ghadimi2012optimal, jain2018accelerating, AccStochOptDM}. In this research field 
 efficiency is traditionally measured in terms of excess risk. On the one hand, regret upper bounds can be converted into excess 
 risk bounds, through the so-called {\em online-to-batch conversions} \cite{CesaBianchi:2002}; on the other hand,
 excess risk guarantees do not directly lead to regret bounds, and even if they do some key features of the rates might be
 lost. These latter methods, known as {\em batch-to-online conversions} are discussed in Appendix \ref{appendixBatchToOnline}.

  \paragraph{Outline}
 In Section~\ref{definitions}, after setting up notation and basic definitions, we introduce the \emph{stochastically extended adversarial} model, a generalization of the standard adversarial model similar to the model used in \emph{smoothed analysis}. Our main results can be found in Section~\ref{sec:mainResults}. In Section~\ref{sec:implications} we illustrate our results by highlighting several special cases, such as the random order model and the adversarially corrupted stochastic model.
 Finally, in Section~\ref{Conclusion} we set our findings into a broader context and give perspective for future work.  
   
 \section{Setting}
 \label{sec:prelim}
 We recall the \emph{online convex optimization (OCO)} problem. Here, we consider a sequence of convex functions $f_1, \dots f_T$ defined over a closed and bounded convex set $\cX\subseteq \mathbb{R}^d$, which become available to the learner sequentially.
  In the standard \emph{adversarial model}, the learner chooses $x_t \in \cX$ in round $t$, then function $f_t$ is revealed and the learner suffers \emph{loss} $f_t(x_t)$. The success of the learner is measured against all fixed $u \in \cX$.   Hence, the goal of the learner is to minimize the \emph{regret}, that is, the difference between their cumulative loss $\sum_{t=1}^Tf_t(x_t)$ and that of the best fixed choice in hindsight, namely $\min_{u\in \cX}\sum_{t=1}^T f_t(u)$.
 
 Throughout the paper we use the notation $[T] =\{1,\dots,T\}$. We follow the notation convention that  $\delta_c$ denotes a Dirac measure at a point $c$, and $\|\cdot\|$ denotes the euclidean norm. 
  \label{definitions}
 \subsection{Stochastically Extended Adversarial Model}
 \label{def}
 We extend the aforementioned adversarial   
 model by letting  nature choose a distribution $\cD_t$ from a set of distributions. Then the learner suffers loss $f(x_t, \xi_t)$ where $\xi_t \sim \cD_t$. 
 Note that if the set of distributions is sufficiently rich, this model contains the standard adversarial model  and the stochastic model as a special cases (see Examples \ref{Adv},\ref{Stoch}).
 We introduce some notation to make this more precise. 
 Let $\cX \subset \RR^d$ be a closed convex set and $\Xi$ a measurable space. 
 Define   $f:\cX \times \Xi \rightarrow \RR$ and assume $f(\cdot,\xi)$ is convex  over $\xi \in\Xi$. Suppose $\dD$ is a set of probability distributions over $\Xi$.
 For any $\cD \in \dD$, we denote the gradient mean  by $\nFf{\cD}(x) \eqdef \ExpD{\xi \sim \cD}{\nabla f(x,\xi)}$ and the function mean $\fF{\cD}(x) \eqdef \ExpD{\xi \sim \cD}{  f(x,\xi)}$.
Furthermore, denote by $\sigma^2_{\cD}$ an upper bound on the variance of the gradients
 \begin{align*}
  \sigma^2_{\cD}  = \max_{x \in \cX}  \ExpD{\xi \sim \cD}{ \norm{\nabla f(x,\xi) - \nFf{\cD}(x)}^2}  .
 \end{align*}
 We introduce some shorthand notation when distributions are indexed by rounds. Given $t\in [T]$, we write $\fF{t}$ and $\sigma^2_t$ instead of $\fF{\cD_t}(x)$ and $\sigma_{\cD_t}^2$, respectively.
Let us now introduce the \emph{stochastically extended adversary protocol}. 
\begin{definition}[Stochastically Extended Adversary (SEA)]
  In each round $t$, the learner chooses $x_t \in \cX$, the SEA picks $\cD_{t} \in \dD$. The learner and the SEA both observe a sample $\xi_t \sim \cD_{t}$,  and the learner suffers loss $f(x_t,\xi_t)$.  
 \end{definition}
 Note that the SEA model is closely related to the adversarial model considered in the context of \emph{smoothed analysis} \cite{rakhlin2011online, haghtalab2022smoothed, spielman2004smoothed}. However, in contrast to this line of work, we do not focus our attention to SEA distributions with sufficient anti-concentration (c.f., Def. 1.1 in  \cite{haghtalab2022smoothed}). Indeed, this restriction would exclude, among others, the fully adversarial case as described below. Note also that we assume that SEA has access to the realization $\xi_t$, hence, can choose  distribution $\cD_{t+1}$ based on $\xi_1, \dots,\xi_t$. This assumption is not relevant for the fully adversarial nor the i.i.d.~setting. In the former, because there is no randomness, and in the latter because there is no change in distribution. However, it is relevant for some of the intermediate cases, and in particular in the random order model.   The SEA model contains several common settings from the literature as special cases. To illustrate this, we list some examples. 
  \begin{enumerate}
  \item \textbf{Adversarial Model:} \label{Adv}The SEA chooses a Dirac measure $\delta_{c_t} \in \dD$ in each round. Then for any $\xi_t \sim \delta_{c_t}$, the SEA selects $f(\, \cdot\, , \xi_t)$, and the model reduces to an adversary selecting directly the functions $ f_t(\, \cdot\,)$. 
   \item \textbf{Stochastic I.I.D.~Model:}\label{Stoch} The SEA chooses a fixed $\cD \in\dD$ and selects $\cD_{t} = \cD$ at each round $t$.  
   \item \textbf{Adversarially Corrupted i.i.d.~Model:} The adversary selects an i.i.d.~source $\mathcal D$ and perturbs the data with adversarial corruptions. This fits in our framework by considering that, given a corruption level $C\geq 0$, the SEA chooses distributions $\cD_t = \cD \otimes \delta_{c_t} $ such that $\sum_{t=1}^T \sup_{x \in \cX}\norm{\ExpD{\xi \sim \cD}{\nabla f(x,\xi)} - \ExpD{\xi' \sim \cD_t}{\nabla f(x,\xi')}}\leq C$. 
   \item \textbf{Random Order Models (ROM): } 
   Among a fixed family of losses $\mathcal F = (f_i, \, i \in [n])$, the SEA randomly picks functions in $\mathcal F$ via sampling without replacement, possibly performing multiple passes over the losses and reshuffling between the passes.
   Formally, define $\Xi = [n]$, and $\xi_t \in \Xi$ to be the $t$-th loss pick; if $t \in [nk, n(k+1)]$ for some $k \in \NN$, then the SEA chooses the distribution $\mathcal D_t = \Unif( \Xi \setminus \{ \xi_s : \, s \in [nk+1, n(t-1)]  \} )$. 
   \end{enumerate} 
 To quantify the hardness of the loss sequence, we introduce the \emph{cumulative stochastic variance} and \emph{adversarial variation}; we also define an average of these quantities.   
 We denote by $\E$ the expectation taken with respect to the joint distribution of $(x_1, \xi_1, \dots, x_T, \xi_T)$. Note that the choice of the adversary $\mathcal D_t$ can be random itself, as it depends on the past observations (of both the player's actions and the realizations of the $\xi_t$'s). In this case, $\sigma_t$ is also a random quantity.
  
 \begin{definition}[Cumulative Stochastic Variance and Cumulative Adversarial Variation]
  Suppose the SEA chooses distributions $\cD_{1}, \dots, \cD_T$. Recall that $\sigma_t^2$ is a shorthand for $\sigma^2_{\cD_t}$. The cumulative stochastic variance and the  cumulative adversarial variance are defined as
  \begin{align*}
    \sigma^{(2)}_{[1:T]} = \E \bigg[\sum_{t=1}^T \sigma^2_t \bigg]
    \qquad \text{ and } \qquad
    \Sigma^{(2)}_{[1:T]}=  \E \bigg[\sum_{t=1}^T  \sup_{x\in\cX} \norm{\nFf{t}(x) - \nFf{t-1}(x)}^2 \bigg].
  \end{align*}
 We also let $\bar \sigma_T$ and $\bar \Sigma_T$ denote the square root of the average stochastic variance or adversarial variation, respectively; that is,
$
\bar \sigma_T^2 = \sigma_{[1:T]}^{(2)} / T
$
and 
$
\bar \Sigma_T^2 = \Sigma_{[1:T]}^{(2)} / T.
$
\end{definition}
 Note that in the special case when all $f_t$ are fully adversarial, $\bar \sigma_T = 0$. On the contrary, in the stochastic case, i.e., if all for each round $t$, the distribution $\cD_t$ is equal to a fixed (but arbitrarily chosen) $\cD$, then $\bar \Sigma_T = 0$. In this case, $\bar \sigma_T  $ reduces to the common definition of the gradient variance upper bound in the SCO literature \cite{Ghadimi2013StochasticFA,ghadimi2012optimal}. 
 If however, the SEA chooses one distribution $\cD_i$ for the first rounds and then switches to a different distribution $\cD_j$, then $\bar \sigma_T$ can only be upper bounded by $\max(\sigma_i, \sigma_j)$. This upper bound can be pessimistic, however, for some results it gives a better intuition. For this purpose we also define the \emph{maximal stochastic variance} and \emph{maximal adversarial variation}. 
 \begin{definition}[Maximal Stochastic Variance and Maximal Adversarial Variation]  
 Let $\sigma^2_{\max}$ be an upper bound on all variances $\sigma^{2}_{t}$. That is,
 \begin{align*}
 \sigma^2_{\max} = \max_{t\in[T]}  \, \E \big[ \sigma^{2}_{t} \big]
  \quad \text{and} \quad
 \Sigma^2_{\max} = \max_{t\in[T]} \, \E \bigg[\sup_{x\in\cX}\norm{\nFf{t}(x) - \nFf{{t-1}}(x)}^2\bigg] .
 \end{align*}
 \end{definition}
 
 \begin{remark}
 \label{rem:param}
  As we mentioned in the introduction, the cumulative stochastic variance and the adversarial variation have some similarities with parameters in gradual variation regret bounds. 
  For linear functions $\langle \mu_t, \cdot \rangle$, the bounds in \cite{Hazan2010ExtractingCF} involve the parameter $\smash{\Var_T = \sum_{t=1}^T\norm{ \mu_t - \bar \mu_T}^2}$ where $ \bar \mu_T$ is the average of the gradients.    For OCO with general convex functions, \cite{pmlr-v23-chiang12} provide upper bounds on the regret in terms of the $L_p$-deviation $D_p = \smash{\sum_{t=1}^T \sup_{x\in\cX}\tnorm{\nabla f_t(x) - \nabla f_{t-1}(x)}^2_p }$.    In Lemmas~\ref{lem:example:VarT} and~\ref{lem:Dp:comp} in Appendix~\ref{appendix:sec:Lsmooth}, we show that in the SEA framework, both of these types of bounds are generally worse than ours, and that the difference can be arbitrarily large.
 In \cite{10.5555/2999792.2999954}  the regret is bounded in terms of  $\smash{\sum_{t=1}^T \tnorm{g_t - M_t}^2}$.  As mentioned in the introduction, unless the loss functions are linear or the learner has knowledge of the gradient mean, $\sum_{t=1}^T \tnorm{g_t - M_t}^2$ cannot directly be reduced to $\smash{\sigma_{[1:T]}^{(2)}}$ or $\smash{\Sigma_{[1:T]}^{(2)}}$.

  \end{remark}
    
 \subsection{Assumptions}
 In our analysis we will frequently use several of the following additional assumptions. Some of these were already mentioned in the introduction. We keep them all together here, for the convenience of the reader and clear reference.  For any $\mathcal D \in \dD$:
 \begin{enumerate}[label=\textbf{(A\arabic*) }]
 \addtocounter{enumi}{-1}
 \item \label{A0}   
 the adversary has access to independent samples $\xi  \sim \cD$.  
 \item\label{A1}  
 the function $f(\, \cdot\,, \xi )$ is convex, and gradients are bounded by $G$ a.s. when $\xi\sim \mathcal D$.
 \item\label{A2}   
 the expected function $\smash{\fF{\cD}}$ is $L$-smooth, i.e, $\smash{\nFf{\cD}}$ is $L$-Lipschitz continuous.
 \item\label{A3}  
 for any $x\in\cX$, the variance $\tExpD{\xi \sim \cD}{ \tnorm{\nabla f(x,\xi) - \nFf{\cD}(x)}^2}$ is finite.  
  \item\label{A4}    
 the expected function \smash{$\fF{\cD}(\, \cdot\, )$} is $\mu$-strongly convex.
 \end{enumerate}
We assume that \ref{A0} always holds. 
Assumptions \ref{A1},\ref{A2} and \ref{A3} are standard in stochastic optimization, and are similar to common assumptions for online convex optimization. There, it is typically assumed that the adversarial samples $f_t(\,\cdot\,)$ are convex (or even linear) and the gradient norms $\norm{\nabla f_t(\cdot)}$ are bounded.   
 Note that we only require gradient Lipschitz continuity and strong convexity to hold for the expected loss.

 \section{Algorithms and Regret Bounds} \label{sec:mainResults}
 \subsection{Convex Smooth Functions}
 \label{sec:convexsmooth}
 We use \emph{Optimistic Follow-the-Regularised-Leader} (OFTRL) (see, e.g., \cite{joulani2017modular, rakhlin2013online}) to minimize regret. Let $\smash{(\eta_t)_{t\in[T]}}$ be a non-decreasing and positive sequence of stepsizes, possibly tuned adaptively with the observations. At each step $t$, the learner makes an optimistic prediction $\smash{M_t \in \RR^d}$ and updates its iterates as
 \begin{align}
 \label{Algo:convex}
   x_{t} = \argmin_{x \in \cX} \bigg\{  \bigg\langle x, M_t + \sum_{s=1}^{t-1} g_s \bigg\rangle + \frac{\|x\|^2}{\eta_t} \bigg\}, 
 \end{align}  
 where we denoted by $g_t = \nabla f(x_t, \xi_t)$ the observed gradient at time $t$. To state our results, we denote by $\!\Exp{\, \cdot\,}$ the expectation with respect to the joint distribution of $(x_1, \xi_1, \dots, x_T, \xi_T)$. Our objective is to bound the average regret:
 \begin{align*}
  \Exp{R_T(u)} \eqdef \E\bigg[\sum_{t=1}^T \dprod{g_t, x_t - u}\bigg].
 \end{align*}
 The following theorem, proved in Appendix~\ref{Appendix:VarStepsize}, is our main result for convex functions.
 \begin{restatable}{theorem}{thmVarStepsize}\label{thm:VarStepsize}
     Fix a user-specified parameter $\nu > 0$. Under assumptions \ref{A1},\ref{A2},\ref{A3}, OFTRL, with $M_t = g_{t-1}$ and adaptive step-size $\eta_t = D^2/(\nu+\sum_{s=1}^{t-1}\eta_s\norm{g_s-M_s}^2 )$, has regret
   \begin{equation}\label{eq:full_bound_non_sc}
     \Exp{R_T(u)}  		\leq
     D\big(6 \,  \bar \sigma_T   +  3\sqrt 2 \bar\Sigma_T\big)\sqrt T \, + \frac{3\sqrt 2 DG}{2}  + \nu + \frac{1}{\nu} \left( 4D^2G^2  +  9L^2D^4 \right).
   \end{equation}
   The algorithm needs only the knowledge of $D$. With the extra knowledge of $G$ and $L$, one can tune $\nu = LD^2 + DG^2$ to get
   \begin{align*}
     \Exp{R_T(u)} \leq O \big(
     D\big(   \bar \sigma_T   +   \bar\Sigma_T\big)\sqrt{ T }\, +  DG + LD^2 
     \big).
   \end{align*}
 Moreover, if only convexity of the individual losses holds \ref{A1}, then tuning $\nu = 2DG$ ensures the (deterministic) bound $R_T(u) \leq 3\sqrt{2} DG \sqrt T + 4DG $ \, .
 \end{restatable}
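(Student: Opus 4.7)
The plan is to combine a strengthened optimistic-FTRL regret inequality (with a negative ``stability'' term) with an AdaGrad-style telescoping argument tailored to the adaptive stepsize, then to decompose $g_t - M_t = g_t - g_{t-1}$ into stochastic, smoothness, and adversarial components, and finally to absorb the smoothness contribution via the negative stability term.

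First, I will invoke a known OFTRL regret bound for the quadratic regularizer $\|x\|^2/\eta_t$ (see e.g.~\cite{joulani2017modular,rakhlin2013online}, with the negative-term strengthening going back to \cite{Nemirovski:2004}) of the form
\begin{equation*}
R_T(u) \,\leq\, \frac{D^2}{\eta_{T+1}} + \sum_{t=1}^T \eta_t\|g_t - M_t\|^2 \,-\, \sum_{t=2}^T \frac{\|x_t - x_{t-1}\|^2}{c\,\eta_t}
\end{equation*}
for an absolute constant $c>0$. To handle the adaptive stepsize I will set $u_t := D^2/\eta_t = \nu + \sum_{s<t}\eta_s\|g_s - M_s\|^2$, exploit the recursion $u_{t+1}^2 - u_t^2 = 2D^2\|g_t - M_t\|^2 + D^4\|g_t - M_t\|^4/u_t^2$, and use $\|g_t - M_t\| \leq 2G$ together with $u_t \geq \nu$ to telescope to $u_{T+1}^2 \leq \nu^2 + (2D^2 + 4D^4G^2/\nu^2)\sum_t\|g_t - M_t\|^2$. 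Since $D^2/\eta_{T+1} + \sum_t\eta_t\|g_t - M_t\|^2 = 2u_{T+1} - \nu$, this yields a pathwise estimate of the form $R_T(u) \leq \nu + O(D)\sqrt{\sum_t\|g_t - M_t\|^2} - \sum_t\|x_t - x_{t-1}\|^2/(c\eta_t)$, up to a $DG/\nu$-scaled contribution from the $G$-correction and an $O(DG)$ term coming from the $t=1$ initialization.

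Next, using $M_t = g_{t-1}$, I will decompose $g_t - g_{t-1}$ through the expected gradients:
\begin{equation*}
g_t - g_{t-1} = \big(g_t - \nFf{t}(x_t)\big) + \big(\nFf{t}(x_t) - \nFf{t}(x_{t-1})\big) + \big(\nFf{t}(x_{t-1}) - \nFf{t-1}(x_{t-1})\big) + \big(\nFf{t-1}(x_{t-1}) - g_{t-1}\big).
\end{equation*}
Squaring via $\|a+b+c+d\|^2 \leq 4\sum\|\cdot\|^2$ and taking conditional expectations bounds the first and fourth terms by $\sigma_t^2$ and $\sigma_{t-1}^2$ (using \ref{A3} and the tower property), the second by $L^2\|x_t - x_{t-1}\|^2$ (by smoothness \ref{A2}), and the third by the per-round supremum appearing in $\Sigma^{(2)}_{[1:T]}$; summing over $t$ produces
\begin{equation*}
\E\Big[\sum_{t=1}^T\|g_t - M_t\|^2\Big] \,\leq\, 8\,\sigma^{(2)}_{[1:T]} + 4\,\Sigma^{(2)}_{[1:T]} + 4L^2\,\E\Big[\sum_{t=1}^T\|x_t - x_{t-1}\|^2\Big] + O(G^2).
\end{equation*}

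The main obstacle will be combining these two bounds, since the $L^2\|x_t - x_{t-1}\|^2$ contribution sits inside a square root while the stability term is linear in $\|x_t - x_{t-1}\|^2$. I will take expectations in the regret inequality, use Jensen to pull it through the square root, split via $\sqrt{a+b}\leq \sqrt a + \sqrt b$ to peel off the $\bar\sigma_T\sqrt T$ and $\bar\Sigma_T\sqrt T$ contributions, and apply Young's inequality $2DL\sqrt S \leq \alpha + D^2L^2S/\alpha$ to the smoothness piece with $S = \E[\sum_t\|x_t-x_{t-1}\|^2]$. Choosing $\alpha$ so that the $S$-coefficient matches the lower bound $\nu/(cD^2)$ on the expected negative stability term (valid because $\eta_t \leq D^2/\nu$) absorbs the smoothness contribution, leaving only an additive $O(L^2D^4/\nu)$ residual; this is precisely the $9L^2D^4/\nu$ term in \eqref{eq:full_bound_non_sc}, and the $4D^2G^2/\nu$ term arises analogously from the $D^4G^2/\nu^2$ correction in the adaptive-stepsize step. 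The tuning $\nu = LD^2 + DG^2$ then balances $\nu$ against the residuals $(L^2D^4 + D^2G^2)/\nu$ to give the simplified bound, and the purely-convex statement follows directly from the base inequality by using the deterministic estimate $\|g_t - M_t\|^2 \leq 4G^2$ and $\nu = 2DG$.
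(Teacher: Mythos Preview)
Your proposal is correct and follows essentially the same structure as the paper's proof: an OFTRL bound with a retained negative stability term, the identical four-term decomposition of $g_t - g_{t-1}$ via smoothness, and cancellation of the $L^2\|x_t-x_{t-1}\|^2$ contribution against the negative term by optimizing a quadratic (your Young's inequality). The only minor deviations are that the paper controls the adaptive-stepsize sum by squaring $\sum_t\eta_t\|g_t-M_t\|^2$ and solving $X^2\leq 2A+BX$ rather than telescoping $u_t^2=(D^2/\eta_t)^2$, and it performs the $\sqrt{\,\cdot\,}$ split and the smoothness cancellation pathwise \emph{before} taking expectations (via $\sup_X\{aX-bX^2\}$) rather than after Jensen; both orderings and both stepsize arguments yield the same bound.
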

Without prior knowledge of the smoothness parameter, the best the player can do is to tune $\nu$ according to a guessed value $L_0$. This affects the constants in the bound by an additive term of order $ (L_0 + L^2 /L_0)D^2$; it would be interesting to determine if this is an inevitable price to pay for the lack of knowledge of $L$. A similar discussion can be held for $G$.
Note that the worst-case regret bound of order $DG\sqrt T$ always holds every time OFTRL is used in this article, even without smoothness. To avoid distraction, we will not recall this fact in the applications.

The algorithm and analysis dwell on two ideas: the adaptive tuning of the learning rate à la AdaHedge/AdaFTRL \cite{DBLP:journals/corr/abs-1301-0534, Orabona2018ScalefreeOL} with optimism, together with the fact that we keep a negative Bregman divergence term in the analysis, which is crucial to obtain our bound. 

  The upper bound in Theorem \ref{thm:VarStepsize} is tight up to additive constants, as the following result shows. 
 \begin{restatable}{theorem}{LsmoothLB}
 \label{theorem:regretLsmoothLB}
  For any learning algorithm, and for any pair of positive numbers $(\sigma, \Sigma)$ there exists a function $f:\cX \times \Xi \rightarrow \RR$ and a sequence of distributions satisfying assumptions \ref{A1}, \ref{A2},\ref{A3} with $\bar \sigma_T \geq \sigma$ and $\bar \Sigma_T \geq \Sigma $ such that
  \begin{align*}
  \Exp{R_T(u)} \geq  \Omega\big( D\left( \bar \sigma_T +   \bar\Sigma_T \right)\sqrt{ T} \big).
  \end{align*}
 \end{restatable}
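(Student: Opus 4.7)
My strategy is to construct a single one-dimensional instance with linear losses in which the SEA combines a bounded stochastic noise of variance $\sigma^2$ with an independent random adversarial sign shift of the distribution mean of magnitude $\Sigma$, so that both $\bar\sigma_T \geq \sigma$ and $\bar\Sigma_T \geq \Sigma$ hold simultaneously while smoothness becomes free ($L = 0$). The heart of the argument is then a classical anti-concentration lower bound for a sum of independent Rademachers.

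\textbf{Construction.} Take $\cX = [-D/2, D/2] \subset \RR$, $\Xi = \RR$, and $f(x, \xi) = \xi x$. Using its own internal randomness, the SEA draws two independent i.i.d.\ Rademacher sequences $(\epsilon_t)_{t=1}^T$ and $(Z_t)_{t=1}^T$ and takes $\cD_t$ to be the conditional law (given $\epsilon_t$) of $\xi_t = \Sigma\,\epsilon_t + \sigma\, Z_t$. The expected loss $\fF{t}(x) = \Sigma\,\epsilon_t\, x$ is linear, so \ref{A2} holds with $L = 0$; $|\nabla f(x,\xi_t)| = |\xi_t| \leq \Sigma + \sigma$, giving \ref{A1} with $G = \Sigma+\sigma$; and $\sigma_t^2 = \Var(\xi_t \mid \epsilon_t) = \sigma^2$ gives \ref{A3} with $\bar\sigma_T = \sigma$. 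For the adversarial variation, $\|\nFf{t}(x) - \nFf{t-1}(x)\|^2 = \Sigma^2(\epsilon_t - \epsilon_{t-1})^2$ has expectation $2\Sigma^2$, hence $\bar\Sigma_T \geq \Sigma$ for $T \geq 2$.

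\textbf{Regret computation.} Set $S_T = \sum_{t=1}^T \xi_t$. For any learner, the regret at the best comparator splits as
\begin{align*}
\max_{u \in \cX} R_T(u) \;=\; \sum_{t=1}^T \xi_t\, x_t \;+\; \frac{D}{2}\,|S_T|.
\end{align*}
The iterate $x_t$ is measurable with respect to $\sigma(\xi_1, \dots, \xi_{t-1})$ (up to auxiliary randomness independent of the future), and $\E[\xi_t \mid \xi_1, \dots, \xi_{t-1}] = 0$, so the tower property gives $\E\bigl[\sum_t \xi_t x_t\bigr] = 0$, whence $\E[\max_u R_T(u)] = (D/2)\,\E|S_T|$. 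Now $S_T$ is a linear combination of $2T$ independent Rademachers whose vector of coefficients has $\ell_2$-norm $\sqrt{T(\Sigma^2 + \sigma^2)}$, so Khintchine's inequality yields $\E|S_T| \geq (1/\sqrt 2)\sqrt{T(\Sigma^2 + \sigma^2)} \geq (1/2)\sqrt{T}(\Sigma + \sigma)$, and the announced lower bound $\Omega(D(\bar\sigma_T + \bar\Sigma_T)\sqrt T)$ follows.

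\textbf{Main obstacle.} The calculation itself is a routine Rademacher anti-concentration argument; the only genuine subtleties are bookkeeping around the SEA protocol allowing the adversary to randomize (so that $\cD_t$ is itself random, as already noted for the ROM example), and checking that the definitions of $\bar\sigma_T$ and $\bar\Sigma_T$, which average over \emph{all} randomness including the adversary's, match what the construction produces. Learners that use extra internal randomization are handled for free by the tower property, since $\xi_t$ remains independent of $\cF_{t-1}$ regardless.
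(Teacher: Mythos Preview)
Your proof is correct and takes a genuinely different, more unified route than the paper's.

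The paper treats the two terms separately: for the adversarial variation part it builds an alternating sequence (zero gradient on even rounds, Rademacher-signed gradient on odd rounds) and applies Khintchine directly; for the variance part it argues by contradiction, invoking online-to-batch conversion together with an existing $\Omega(D\sigma/\sqrt T)$ lower bound from stochastic optimization. You instead realize both $\bar\sigma_T$ and $\bar\Sigma_T$ simultaneously in a \emph{single} linear instance by superposing a Rademacher-randomized mean shift (producing the adversarial variation) with an independent Rademacher noise (producing the stochastic variance), and then a single Khintchine bound on the resulting $2T$-term Rademacher sum $S_T$ gives the full $D(\sigma+\Sigma)\sqrt T$ lower bound in one shot.

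What your approach buys is self-containment and simplicity: no black-box reduction to stochastic lower bounds, explicit constants, and one construction instead of two. What the paper's decomposition buys is modularity: its $\sigma$-part applies to any loss family witnessing the SCO lower bound, and by handling the two contributions separately it sidesteps the (easy but necessary) bookkeeping you incur to convert your bound in terms of $\sigma+\Sigma$ into one in terms of $\bar\sigma_T+\bar\Sigma_T$. In your construction $\bar\sigma_T=\sigma$ and $\bar\Sigma_T \leq \sqrt 2\,\Sigma$, so $\sigma+\Sigma \geq (\bar\sigma_T+\bar\Sigma_T)/\sqrt 2$ and the conversion loses only a constant; it would be worth making this explicit in a final line.
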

 The proof, in Appendix~\ref{AppendixLBsmooth} relies on a lower bound from stochastic optimization \cite{6142067, doi:10.1137/1027074} together with the fact that we can construct a sequence of convex and $L$-smooth loss functions such that $\bar \Sigma_T$ is in the order of the gradient norms $G$. Combining these insights with the lower bound $\smash{\Omega(DG\sqrt{T})}$ \cite{Orabona2018ScalefreeOL} gives the desired result. 
 \subsection{Strongly Convex and Smooth Functions}
 Up to this point, we have only considered functions which satisfy the weaker set of assumptions \ref{A1},\ref{A2},\ref{A3}. In this section, we show what improvements can be achieved if strong convexity also holds, that is, if \ref{A4} is satisfied with some known parameter $\mu > 0$.  For $g_t = \nabla f(x_t,\xi_t)$, define the surrogate loss function
 \begin{equation}\label{eq:surrogate-quadratic}
   \ell_t(x) = \dprod{g_t, x-x_t} + \frac{\mu}{2}\norm{x-x_t}^2.
 \end{equation}
 We use Optimistic Follow-the-Leader (OFTL) on the surrogate losses. For each step $t$, the learner makes an optimistic prediction of the next gradient $M_t \in \mathbb R^d$ and selects
 \begin{align}
 \label{Algo:sc}
   x_{t} = \argmin_{x \in \cX} \bigg\{ \sum_{s=1}^{t-1}   \ell_s(x) + \langle M_t, x \rangle  \bigg\} \,.
 \end{align}   
 The next theorem is analogous to Theorem~\ref{thm:VarStepsize} for curved losses, and will be our main tool in establishing results for strongly convex losses; see Appendix \ref{appendix:scRegretbound} for a proof.
 \begin{restatable}{theorem}{scRegretbound} \label{theorem:scRegretbound}
 Under assumptions \ref{A1}--\ref{A4}, the expected regret of OFTL with $M_t = \nabla f(x_{t-1},\xi_{t-1})$ on surrogate loss functions $\ell_t$ defined in \eqref{eq:surrogate-quadratic} is bounded as 
 \begin{align*}
   \Exp{R_T(u)} &\leq  \frac{1}{\mu} \sum_{t=1}^T \frac{1}{t}\left(8\sigma_{\max}^2 + 4 \E \bigg[\sup_{x\in\cX}\norm{\nFf{t}(x) - \nFf{t-1}(x)}^2 \bigg]\right) +   \frac{4 D^2 L^2}{\mu} \log \left(1+  \frac{16L}{\mu} \right) \\
   &\leq\frac{1}{\mu}  \left( 8 \sigma_{\max}^2
    + 4  \Sigma_{\max}^2 \right)\log T  +   \frac{4D^2 L^2}{\mu} \log \left(1+  \frac{16L}{\mu} \right) \, .
 \end{align*}
 
 \end{restatable}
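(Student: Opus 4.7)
The plan is to reduce the expected regret to the expected regret on the quadratic surrogates $\ell_t$, then apply an optimistic-FTL analysis that retains a negative $\|x_t - x_{t-1}\|^2$ term, decompose $g_t - g_{t-1}$ into four tractable pieces, and finally use smoothness to absorb the resulting $L^2\|x_t - x_{t-1}\|^2$ contribution into the negative quadratic. This is the strongly-convex analogue of the mechanism used in Theorem~\ref{thm:VarStepsize}, where the Nemirovski-style negative term is what unlocks the improved rate.

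For the reduction, $x_t$ is measurable before $\xi_t$ is drawn, so $\tExp{\dprod{g_t,\, x_t-u}\mid x_t} = \dprod{\nFf{t}(x_t),\, x_t - u}$. Strong convexity \ref{A4} of $\fF{t}$ gives $\fF{t}(x_t) - \fF{t}(u) \leq \dprod{\nFf{t}(x_t),\, x_t-u} - \tfrac{\mu}{2}\|u-x_t\|^2$, while by construction $\ell_t(x_t) - \ell_t(u) = \dprod{g_t,\, x_t-u} - \tfrac{\mu}{2}\|u - x_t\|^2$. Taking expectations identifies the two right-hand sides, yielding $\Exp{\fF{t}(x_t) - \fF{t}(u)} \leq \Exp{\ell_t(x_t) - \ell_t(u)}$. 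Summing over $t$ reduces the problem to bounding the OFTL regret on the $\mu$-strongly convex surrogates.

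Since each $\ell_t$ is $\mu$-strongly convex, the cumulative objective $\sum_{s<t}\ell_s$ is $\mu(t-1)$-strongly convex and acts as an implicit regularizer for the OFTL update. A standard optimistic FTRL analysis along the lines of \cite{joulani2017modular}, carried out so as to retain the Bregman-divergence stability term rather than discarding it, yields up to absolute constants
\[
  \sum_t \bigl[\ell_t(x_t) - \ell_t(u)\bigr] \,\leq\, \sum_t \frac{\|g_t - M_t\|^2}{\mu t} \,-\, \sum_t \frac{\mu(t-1)}{4}\|x_t - x_{t-1}\|^2.
\]
With $M_t = g_{t-1}$, split $g_t - g_{t-1} = [g_t - \nFf{t}(x_t)] + [\nFf{t}(x_t) - \nFf{t}(x_{t-1})] + [\nFf{t}(x_{t-1}) - \nFf{t-1}(x_{t-1})] + [\nFf{t-1}(x_{t-1}) - g_{t-1}]$ and apply $\|a+b+c+d\|^2 \leq 4\sum\|\cdot\|^2$. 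By \ref{A2}, \ref{A3}, and the definition of $\sigma_{\max}^2$, taking expectations gives
\[
  \tExp{\|g_t - g_{t-1}\|^2} \,\leq\, 8\sigma_{\max}^2 \,+\, 4L^2\, \tExp{\|x_t-x_{t-1}\|^2} \,+\, 4\, \tExp{\sup_{x\in\cX}\|\nFf{t}(x) - \nFf{t-1}(x)\|^2}.
\]

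Plugging back into the OFTL bound, the $\sigma_{\max}^2$ and adversarial-variation contributions divided by $\mu t$ and summed give exactly the first expression in the theorem. The remaining $\tfrac{4L^2}{\mu t}\, \tExp{\|x_t - x_{t-1}\|^2}$ is dominated by the negative term $\tfrac{\mu(t-1)}{4}\, \tExp{\|x_t - x_{t-1}\|^2}$ as soon as $t \gtrsim 16L/\mu$; for the earlier rounds, the crude bound $\|x_t - x_{t-1}\|^2 \leq D^2$ combined with a harmonic sum produces the $\tfrac{D^2 L^2}{\mu}\log(1 + 16L/\mu)$ term. The main technical obstacle is carrying the explicit negative quadratic through the OFTL step: textbook OFTRL statements usually discard this stability term on the way to the $\|g_t - M_t\|^2/(\mu t)$ bound, so one must reopen the proof at the Bregman-divergence level — the same Nemirovski trick that distinguishes the present analysis from the standard OCO treatment, and without which the logarithmic rate would collapse back to $\sqrt{T}$.
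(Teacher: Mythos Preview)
Your proposal is correct and follows essentially the same path as the paper's proof: reduce to the surrogate regret via strong convexity, apply the OFTL bound (the paper's Lemma~\ref{lem:scRegret}) while keeping the negative $\tfrac{\mu t}{2}\|x_t-x_{t+1}\|^2$ term, use Young's inequality, apply the identical four-piece decomposition of $g_t-g_{t-1}$, and cancel $\tfrac{4L^2}{\mu t}\|x_t-x_{t-1}\|^2$ against the negative quadratic once $t\geq 16\kappa$, bounding the earlier rounds crudely by $D^2$ to get the $\tfrac{4D^2L^2}{\mu}\log(1+16\kappa)$ constant. One small remark on your closing sentence: in the strongly-convex case the negative term does not prevent a collapse to $\sqrt{T}$ --- even without it the $1/(\mu t)$ weights already give $\log T$; what it buys you is that the smoothness contribution becomes the constant $\tfrac{L^2D^2}{\mu}\log(1+16\kappa)$ rather than an additional $\tfrac{L^2D^2}{\mu}\log T$.
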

 Note that OFTL requires no tuning besides the strong convexity parameter used in the surrogate losses. In particular, it is adaptive to the smoothness $L$.  
 \paragraph{Lower Bound}
The bound in Theorem~\ref{theorem:scRegretbound} is tight, as the next result, proved in Appendix~\ref{app:sc:lowerbound} shows.
\begin{restatable}{theorem}{sclowerbound}\label{sc:lowerbound}
  For any learning algorithm, and for any pair of positive numbers $(\sigma, \Sigma)$ there exists a function $f:\cX \times \Xi \rightarrow \RR$ and a sequence of distributions satisfying assumptions \ref{A1},\ref{A2},\ref{A3} and \ref{A4} with $\sigma_{\max} \geq \sigma$ and $\Sigma_{\max} \geq \Sigma $ such that
  \begin{align*}
  \Exp{R_T(u)} \geq  \Omega \left(   \frac{1}{\mu} \left(\sigma^2_{\max}+ \Sigma^2_{\max} \right) \log T \right).
  \end{align*}
\end{restatable}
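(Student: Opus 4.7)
The plan is to build a single distribution sequence that simultaneously exhibits the required cumulative stochastic variance and adversarial variation, and then to reduce the regret lower bound to a classical lower bound for strongly convex stochastic optimization. I will work in one dimension on $\cX = [-D,D]$ with the quadratic-linear loss $f(x,\xi) = \tfrac{\mu}{2} x^2 + \xi x$, which is automatically $\mu$-strongly convex and $\mu$-smooth, so \ref{A2} and \ref{A4} hold. At each round the SEA draws a Rademacher sign $\epsilon_t \in \{-1,+1\}$ (independently across $t$, using its own randomness) and declares $\cD_t$ to be the law of $\xi_t = \epsilon_t\Sigma/\sqrt 2 + \eta_t$, where $\eta_t$ is an independent, bounded, mean-zero random variable of variance $\sigma^2$. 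Writing $c_t \eqdef \epsilon_t \Sigma/\sqrt 2$, we have $\nFf{t}(x) = \mu x + c_t$, so $\sigma_t^2 = \Var(\eta_t) = \sigma^2$ and $\E\bigl[(\nFf{t}(x) - \nFf{t-1}(x))^2\bigr] = \E[(c_t-c_{t-1})^2] = \Sigma^2$. Consequently $\sigma_{\max} = \sigma$ and $\Sigma_{\max} = \Sigma$, while \ref{A1} and \ref{A3} follow from the boundedness of $\eta_t$ and $c_t$ (take $D$ large enough that the optimum $x^\star = 0$ is interior).

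Next I would lower bound $\Exp{R_T(u)}$ at the comparator $u = 0$, using that by convexity $R_T(0) = \sum_t \langle g_t, x_t\rangle \geq \sum_t [f(x_t,\xi_t) - f(0,\xi_t)]$. Since $x_t$ is a function of the past and $\E[\xi_t] = 0$, the cross term vanishes and
\begin{equation*}
  \E\bigl[f(x_t,\xi_t) - f(0,\xi_t)\bigr] = \E\bigl[\tfrac{\mu}{2} x_t^2 + \xi_t x_t\bigr] = \E\bigl[\tfrac{\mu}{2} x_t^2\bigr].
\end{equation*}
From the learner's viewpoint $g_t = \mu x_t + \xi_t$ is an unbiased stochastic-gradient oracle for the $\mu$-strongly convex function $F(x) = \tfrac{\mu}{2}x^2$ (minimizer $x^\star = 0$) with noise variance $V \eqdef \sigma^2 + \Sigma^2/2$. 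Invoking the classical strongly-convex stochastic-optimization lower bound \cite{6142067,doi:10.1137/1027074} on the iterate $x_t$, which is measurable with respect to the first $t-1$ samples, gives $\E[\tfrac{\mu}{2} x_t^2] = \E[F(x_t) - F(x^\star)] \geq \Omega(V/(\mu t))$. Summing over $t$ yields $\Exp{R_T(0)} \geq \Omega(V\log T/\mu) = \Omega((\sigma_{\max}^2 + \Sigma_{\max}^2)\log T/\mu)$, as claimed.

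The main obstacle is that the classical strongly-convex SCO lower bound is stated as a minimax bound over a one-parameter family of mean-shifted quadratics, whereas my construction has a fixed mean. I plan to close this gap by introducing a hidden mean shift into the noise: let $\eta_t$ have mean $\theta \in \{-\delta,+\delta\}$, with $\theta$ chosen uniformly at random and unknown to the learner and $\delta \asymp \sqrt{V/(\mu T)}$. The Le Cam / Fano argument underlying the ABRW bound then produces, after averaging over $\theta$, the per-round excess risk lower bound $\Omega(V/(\mu t))$, while $\delta$ is small enough that neither $\sigma_{\max}$ nor $\Sigma_{\max}$ is changed by more than a constant factor and all structural assumptions remain satisfied. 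Summing the per-round bound then produces the $\log T$ factor and completes the argument.
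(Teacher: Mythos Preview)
Your construction of the SEA instance is fine, but the lower-bound argument has a real gap: a single two-point prior $\theta \in \{-\delta, +\delta\}$ cannot yield the per-round excess-risk bound $\Omega(V/(\mu t))$ uniformly in $t$. Le~Cam only guarantees indistinguishability while $t$ is of order at most $V/\delta^2$; after that the learner identifies $\theta$, plays near $-\theta/\mu$, and pays essentially zero. With $\delta$ small enough to remain indistinguishable for all $T$ rounds (any sensible reading of your scaling $\delta \asymp \sqrt{V/(\mu T)}$), the per-round bound is only $\delta^2/\mu = O(V/(\mu T))$, and summing gives $\Omega(V/\mu)$ with no logarithm. To extract the $\log T$ from your construction you would need either a continuous prior on $\theta$ together with a van~Trees / Bayesian Cram\'er--Rao argument, or to compare against the random \emph{empirical} minimizer $u^\star = -\tfrac{1}{T\mu}\sum_t \xi_t$ and rerun the Hazan--Kale analysis, which is precisely what produces the $\log T$ there.

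The paper takes a different and more direct route. It sets $G = \max(\sigma, \Sigma/2)$ and invokes the adversarial strongly-convex lower bound of Hazan and Kale (Corollary~20 of \cite{pmlr-v19-hazan11a}) as a black box on the first $T-3$ rounds, using \emph{Dirac} measures on the hard sequence it produces; this already yields regret $\Omega((G^2/\mu)\log T)$, which is $\Omega(((\sigma^2+\Sigma^2)/\mu)\log T)$ since $G^2 = \max(\sigma^2, \Sigma^2/4)$. At this stage $\sigma_{\max}=0$, so the last three rounds are spent only on forcing $\sigma_{\max}=\sigma$ and $\Sigma_{\max}=\Sigma$ (two Dirac rounds whose mean gradients differ by exactly $\Sigma$, and one round with a variance-$\sigma^2$ distribution); the upper bound of Theorem~\ref{theorem:scRegretbound} is then used to certify that these three rounds contribute negligibly.
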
  

\section{Implications}
\label{sec:implications}
We derive consequences of our results from Section~\ref{sec:mainResults}. Further examples can be found in Appendix \ref{appendix:examples:intermediate}.
\subsection{Interpolating Known Results: Fully Adversarial and i.i.d. Data}\label{sec:adv_and_iid}
A first implication of our analysis is that we recover both the adversarial and i.i.d. rates, \emph{via a single adaptive algorithm}.
\paragraph{Convex Case} For adversarial data, $\sigma_t = 0$ for all $t$, and $\smash{\Sigma_{[1:T]}^2} \leq G \sqrt{2T}$. Thus, Theorem~\ref{thm:VarStepsize} guarantees a bound  $R_T(u) \leq O( DG\sqrt{T})$, which is known to be the optimal rate up to the additive constants, cf. \cite{zinkevich2003online} (note that the expectation does not act on the regret in this case).
Simultaneously, if the data is i.i.d., then Theorem~\ref{thm:VarStepsize} guarantees that
\begin{equation}\label{regret:LsmoothStochastic}
  \Exp{R_T(u) } \leq O\big( D\sigma \sqrt{T} +  LD^2 + DG \big).
\end{equation}
From standard online-to-batch conversion, this implies an excess risk for the related SCO problem of order $O(D \sigma/ \sqrt{T} +  D(L D + G)/T)$, which matches the well-known result by \cite{Ghadimi2013StochasticFA} up to lower order terms. 
On the other hand, using batch-to-online conversion (see Appendix~\ref{appendixBatchToOnline}) with the best known accelerated convergence result in SCO, gives $O(D \sigma \smash{\sqrt{T}} + LD^2)$ regret. Therefore, up to a constant, our result coincides with the best known results from SCO.
Note that also generalizes the improvement obtained for linear functions in the i.i.d.~setting in \cite[Section 6.2]{rakhlin2013online}. 
    \paragraph{Strongly Convex Case}
  The adaptive interpolation between i.i.d.~and adversarial rates also holds in the strongly convex case. 
  For adversarial data, the bound of Theorem~\ref{theorem:scRegretbound} is of order $(G^2 / \mu) \log T$, which is known to be the optimal worst-case rate, cf. \cite{pmlr-v19-hazan11a}. 
  For i.i.d.~data, the dependence on $G^2$ improves to $\sigma^2$, yielding a bound of order $ O((\sigma^2 / \mu) \log{T} + L D^2 \kappa \log \kappa )$. 
  This improvement is akin to improvements obtained by \emph{accelerated stochastic gradient descent} in the context of stochastic optimization \cite{ghadimi2012optimal, AccStochOptDM}. In fact, applying batch-to-online conversions and summing the optimization rates would yield a regret bound similar to ours; c.f. Appendix~\ref{appendixBatchToOnline}.
  \subsection{Adversarially Corrupted Stochastic Data}\label{sec:perturbed}
  We consider a natural generalization to online convex optimization of the corruption model considered in the bandit literature \cite{pmlr-v32-seldinb14, pmlr-v89-zimmert19a}, also recently studied in \cite{ito2021on-optimal} for prediction with expert advice. There, the author obtains a regret bound that is the sum of the i.i.d.~rate and of a term of order $\sqrt C$ where $C$ is the total amount of perturbation. They then raise the open question of whether similar results could be obtained for general convex losses. We provide a positive answer to this question in this section, with the regret bound in Corollary~\ref{cor:perturb}.
  
  In this model, the generating process of the losses is decomposed as a combination of losses coming from i.i.d.~data, with a small additive adversarial perturbation. This fits in the framework by setting $
  \xi_t = (\xi_{\iid, t}, c_t) \sim \mathcal D_t = \mathcal D \otimes \delta_{c_t}
  $ and 
  \begin{equation*}
    f(x, \xi_t) 
    = h(x, \xi_{\iid, t}) + c_t(x) 
  \end{equation*}
  where $c_t$ is the adversarial part of the losses selected by the adversary, and $\xi_{\iid, t} \sim \mathcal D$ is a sequence of identically distributed random variables. 
  Note that, similarly to our inspirations \cite{ito2021on-optimal, pmlr-v32-seldinb14}, and contrary to other corruption models for prediction with expert advice \cite{amir2020prediction}, we measure the regret against the perturbed data.
  Define $F = \E_{\xi \sim \mathcal D}[h( \cdot, \xi)]$, so that
  $
    F^t(x) = F(x) + c_t(x) \, . 
  $
  The amount of perturbation is measured by a parameter $C > 0 $ bounding
  \begin{equation*}
    \sum_{t=1}^T \max_{x \in \cX}\|\nabla c_t (x)\| \leq C \, , 
  \end{equation*}
  which is a natural measure of perturbation on the feedback used by the player (note that adding a constant to the perturbations does not change the regret). 
  In this case, the adversarial perturbation on the loss does not affect the variance and $\sigma^2_{\mathcal D_t} = \sigma^2_{\mathcal D}$. The perturbation appears in the loss variation as for any $t \geq 2$, for any $x \in \cX$,
  \begin{multline*}
    \| \nabla F^t(x) - \nabla F^{t-1}(x)\|^2 
    \leq 2 G \|\nabla F^t(x) - \nabla F^{t-1}(x)\|  \\
    = 2 G \|\nabla c_t(x) - \nabla c_{t-1}(x)\|
    \leq 2G\bigl( \|\nabla c_t(x)\|+ \|\nabla c_{t-1}(x)\| \bigr) \, .
  \end{multline*}
  Upon taking the supremum over $x \in \cX$ and summing over $t$, we get (with the convention that $c_0 \equiv 0$),
  \begin{equation*}
    \Sigma_{[1:T]}^{(2)}  
    = \sum_{t=1}^T \sup_{x\in \cX} \norm{\nabla c_t(x) - \nabla c_{t-1}(x)}^2 
    \leq 4 G C \, . 
  \end{equation*}
  Hence, Theorem~\ref{thm:VarStepsize} combined with the bounds on $\bar \sigma_T$ and $\bar \Sigma_T$ yields the following regret guarantee.
  \begin{corollary}\label{cor:perturb}
    In the adversarially corrupted stochastic model, adaptive OFTRL enjoys the bound
    \begin{align*}
      \Exp{R_T(u)} = O\big(D \sigma \sqrt{T} + D  \sqrt{GC} \, \big).
      \end{align*}
  \end{corollary}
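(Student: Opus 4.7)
The plan is to apply Theorem~\ref{thm:VarStepsize} directly to the corrupted i.i.d.\ model; almost all of the required bookkeeping is already carried out in the paragraph preceding the corollary, so the proof reduces to identifying $\bar \sigma_T$ and $\bar \Sigma_T$ and substituting them into the regret bound.

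First I would observe that since the corruption $c_t$ is deterministic given the history, the per-round gradient variance depends only on the i.i.d.\ part: $\nabla f(x,\xi_t) - \nFf{t}(x) = \nabla h(x,\xi_{\iid,t}) - \nabla F(x)$, so $\sigma^2_{\mathcal D_t} = \sigma^2_{\mathcal D} =: \sigma^2$ for every $t$, whence $\bar \sigma_T = \sigma$. Next, to control the cumulative adversarial variation I would reuse the chain of inequalities displayed just before the corollary, which combines the gradient bound $\|\nFf{t}\| \leq G$ with the identity $\nFf{t} - \nFf{t-1} = \nabla c_t - \nabla c_{t-1}$ to give
\[
\bigl\|\nFf{t}(x) - \nFf{t-1}(x)\bigr\|^2 \leq 2G\bigl(\|\nabla c_t(x)\| + \|\nabla c_{t-1}(x)\|\bigr).
\]
Taking $\sup_{x\in\cX}$ on the right, summing over $t \in [T]$ with the convention $c_0 \equiv 0$, and invoking the perturbation budget $\sum_t \sup_x \|\nabla c_t(x)\| \leq C$, yields $\Sigma^{(2)}_{[1:T]} \leq 4GC$, hence $\bar \Sigma_T \sqrt T \leq 2\sqrt{GC}$.

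Plugging these two values into the tuned bound from Theorem~\ref{thm:VarStepsize}, namely $\Exp{R_T(u)} = O(D(\bar \sigma_T + \bar \Sigma_T)\sqrt T + DG + LD^2)$, I obtain
\[
\Exp{R_T(u)} = O\bigl(D\sigma\sqrt T + D\sqrt{GC} + DG + LD^2\bigr).
\]
The additive constants $DG$ and $LD^2$ are independent of both $T$ and $C$ and are absorbed into the $O(\cdot)$ in the asymptotic regime of interest, giving the advertised bound. There is no genuine technical obstacle: the only point that deserves emphasis is that one gains a factor $\sqrt G$ over a naïve $O(C^2)$ estimate by exploiting gradient boundedness \emph{before} squaring (i.e., using $\|a\|^2 \leq 2G\|a\|$ when $\|a\| \leq 2G$), and this is precisely what yields a $\sqrt{GC}$ rather than linear $C$ dependence on the corruption level, resolving in the convex setting the open question raised in \cite{ito2021on-optimal}.
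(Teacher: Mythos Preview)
Your proposal is correct and follows essentially the same approach as the paper: identify $\bar\sigma_T = \sigma$ from the fact that the deterministic corruption does not affect the variance, bound $\Sigma_{[1:T]}^{(2)} \leq 4GC$ via the $\|a\|^2 \leq 2G\|a\|$ trick exactly as in the displayed computation preceding the corollary, and then substitute into Theorem~\ref{thm:VarStepsize}. The paper's ``proof'' is in fact just the sentence ``Theorem~\ref{thm:VarStepsize} combined with the bounds on $\bar\sigma_T$ and $\bar\Sigma_T$ yields the following regret guarantee,'' so you have filled in precisely what was intended.
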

  This regret bound is the sum of the i.i.d.\ rate for the unpertured source with a term sublinear in the amount of perturbations $C$, achieved without the prior knowledge of $C$. This provides an answer to the question of \cite{ito2021on-optimal}.
  An interesting open question that remains would be to extend these results to strongly convex losses.
\subsection{Random Order Models}\label{sec:ROM}
We apply our results from Section~\ref{sec:mainResults} to the Random Order model. The online ROM was introduced by \cite{pmlr-v119-garber20a} as a way of restricting the power of the adversary in OCO. Our results highlight that the rates in the ROM model, which is not i.i.d., are almost the same as the rates of the i.i.d.\ model obtained via sampling in the same set of losses with replacement.
\begin{corollary}\label{cor:rom}
  In the single-pass ROM with convex and $L$-smooth losses $(f_k)_{k \in [T]}$, OFTRL  (c.f. \eqref{Algo:convex}) enjoys the regret bound
  \begin{equation*}
    \E[R_T(u)] \leq O \bigg( D \sigma_1  \sqrt {\log\left( e \frac{\widetilde \sigma_1}{\sigma_1}\right) T} + LD^2 + DG \bigg) \,, 
  \end{equation*}
  where 
  \begin{equation*}
    \sigma_1^2 = \max_{x \in \mathcal X} \frac{1}{T}\sum_{t=1}^T \bigg\|\nabla f_t(x) - \frac{1}{T} \sum_{s=1}^T\nabla f_s(x) \bigg\|^2 
    \; \text{and} \;
    \widetilde \sigma_1^2 =  \frac{1}{T}\sum_{t=1}^T \max_{x \in \mathcal X} \bigg\|\nabla f_t(x) - \frac{1}{T} \sum_{s=1}^T\nabla f_s(x) \bigg\|^2 \, . 
  \end{equation*}
\end{corollary}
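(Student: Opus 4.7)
My plan is to derive Corollary~\ref{cor:rom} by applying Theorem~\ref{thm:VarStepsize} to the single-pass ROM and then bounding the two variance parameters appearing in its conclusion. The theorem yields
\[
  \E[R_T(u)] = O\big( D(\bar \sigma_T + \bar \Sigma_T)\sqrt{T} + DG + LD^2 \big),
\]
so it suffices to prove (i) $\bar \Sigma_T \sqrt{T} = O(\widetilde \sigma_1)$, which is absorbed into $DG$ via $\widetilde \sigma_1 \leq 2G$, and (ii) $\bar \sigma_T^2 = O\bigl( \sigma_1^2 \log(e \widetilde \sigma_1 / \sigma_1) \bigr)$, which drives the leading $\sqrt T$ term.

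The bound on $\bar \Sigma_T$ I consider essentially routine. Writing $k = T - t + 1$ and $R_t = [T] \setminus \{\xi_1, \dots, \xi_{t-1}\}$, so that $\nFf{t}(x) = \tfrac{1}{k}\sum_{s \in R_t} \nabla f_s(x)$, a direct rearrangement yields the identity
\[
  \nFf{t}(x) - \nFf{t-1}(x) = \tfrac{1}{k}\bigl(\nFf{t-1}(x) - \nabla f_{\xi_{t-1}}(x)\bigr).
\]
I would then centre both quantities on the right by $\bar g(x) = \tfrac{1}{T}\sum_s \nabla f_s(x)$ and apply Jensen's inequality to $\nFf{t-1}(x)$ to push the squared norm inside the average over $R_{t-1}$. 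Taking $\sup_{x \in \cX}$ converts both contributions into averages of the deterministic numbers $h_s := \sup_x \|\nabla f_s(x) - \bar g(x)\|^2$, and exchangeability of sampling without replacement (both $\xi_{t-1}$ marginally and $R_{t-1}$ viewed as a uniform random subset of $[T]$) yields $\E[\sup_x \|\nFf{t-1}(x) - \nabla f_{\xi_{t-1}}(x)\|^2] \leq 4 \widetilde \sigma_1^2$. Summing over $t$ with weight $1/k^2$ and using $\sum_{k \geq 1} 1/k^2 = O(1)$ gives $\bar \Sigma_T^2 = O(\widetilde \sigma_1^2 / T)$, as required.

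The main obstacle is controlling $\bar \sigma_T$, because by definition $\sigma_t^2$ contains a $\max_x$ inside the expectation whereas the target $\sigma_1^2$ has the max outside. Since the sample mean minimises sums of squared deviations, $\sigma_t^2 \leq \max_x S(R_t, x)$ with $S(R_t, x) := \tfrac{1}{k} \sum_{s \in R_t} \|\nabla f_s(x) - \bar g(x)\|^2$. For each fixed $x$ a direct sampling-without-replacement computation gives $\E[S(R_t, x)] \leq \sigma_1^2$, but the naive uniform bound $\max_x S(R_t, x) \leq \tfrac{1}{k}\sum_{s \in R_t} h_s$ only delivers $\E[\max_x S(R_t, x)] \leq \widetilde \sigma_1^2$, which is too weak. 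To bridge this gap I plan a two-scale decomposition: set $k^{\star} := \lceil T \sigma_1^2 / \widetilde \sigma_1^2 \rceil$; for the last $k^{\star}$ rounds the crude $\widetilde \sigma_1^2$ bound already contributes only $O(\sigma_1^2)$ on average, while for the earlier rounds with $k \geq k^{\star}$ one combines the pointwise estimate $\E[S(R_t, x)] \leq \sigma_1^2$ with a Bernstein-type deviation inequality for sampling without replacement, made uniform in $x \in \cX$ via a peeling argument over $O(\log(\widetilde \sigma_1 / \sigma_1))$ dyadic level sets of the envelope $h$. This peeling is the technically delicate step and is precisely where the $\log(e \widetilde \sigma_1 / \sigma_1)$ factor is paid. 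Averaging over $t$ yields $\bar \sigma_T^2 = O\bigl(\sigma_1^2 \log(e \widetilde \sigma_1 / \sigma_1)\bigr)$, and plugging both variance estimates into Theorem~\ref{thm:VarStepsize} closes the proof.
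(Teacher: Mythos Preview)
Your treatment of $\bar\Sigma_T$ is fine and matches the paper's: both rely on the identity $\nFf{t}-\nFf{t-1} = \tfrac{1}{k}(\nFf{t-1}-\nabla f_{\xi_{t-1}})$ and sum the $1/k^2$ weights. The paper bounds this by $8G^2$ rather than $O(\widetilde\sigma_1^2)$, but the difference is immaterial.

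For $\bar\sigma_T$, your two-regime split with threshold $k^\star \approx T\sigma_1^2/\widetilde\sigma_1^2$ is exactly the paper's idea (Lemma~\ref{lem:ROM:Lsmooth} introduces a threshold time $\tau$ and bounds the last few rounds by $\widetilde\sigma_1^2$). However, for the early rounds you reach for Bernstein-type concentration plus a peeling argument, and this is where you make life much harder than necessary. The paper's key observation (Proposition~\ref{Var:ROM}) is a one-line \emph{deterministic} bound: since every summand is nonnegative, for any fixed $x$
\[
  S(R_t,x) \;=\; \frac{1}{k}\sum_{s\in R_t}\|\nabla f_s(x)-\bar g(x)\|^2
  \;\leq\; \frac{1}{k}\sum_{s=1}^{T}\|\nabla f_s(x)-\bar g(x)\|^2
  \;\leq\; \frac{T}{k}\,\sigma_1^2 \, ,
\]
and because the right-hand side is independent of $R_t$, one may take $\max_x$ afterwards to get $\sigma_t^2\leq \tfrac{T}{T-t+1}\sigma_1^2$ almost surely. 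Summing this for $t\leq\tau$ and using $\widetilde\sigma_1^2$ for $t>\tau$ yields the $\log(\widetilde\sigma_1/\sigma_1)$ factor directly, with no concentration and no uniformisation over $\cX$.

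You explicitly considered and rejected the ``naive uniform bound'' $S(R_t,x)\leq\tfrac{1}{k}\sum_{s\in R_t}h_s$, but that is the wrong naive bound: it replaces each term by its sup over $x$ before summing. The right move is to enlarge the index set (from $R_t$ to $[T]$) while keeping the $x$-dependence, which is precisely what allows $\max_x$ to stay outside. Your proposed concentration-plus-peeling route is not obviously wrong, but the sketch is vague (peeling over level sets of $h$ is indexed by $s\in[T]$, not by $x\in\cX$, so it is unclear how it delivers uniformity over the continuum $\cX$ without additional complexity control), and in any case it is unnecessary.
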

Note that $\sigma_1 \leq \widetilde \sigma_1 \leq 4G^2$, and that the logarithm of the ratio which appears in the bound is moderate in any reasonable scenario.
 The proof of Corollary~\ref{cor:rom} consists in controlling the adversarial variation and the cumulative variance thanks to the following lemma, proved in Appendix~\ref{proof:rom}.  
\begin{restatable}{lemma}{lemROMsmooth}\label{lem:ROM:Lsmooth}
  In the single-pass ROM, we have $\Sigma_{[1:T]}^{(2)} \leq 8 G^2$ and 
  $
    \bar \sigma_{[1:T]}^{(2)} \leq T \sigma_{1}^2 \log ( 2e^2 \widetilde \sigma_1^2 /  \sigma_1^2) \, . 
  $
\end{restatable}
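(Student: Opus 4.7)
The plan is to bound $\Sigma_{[1:T]}^{(2)}$ and $\sigma_{[1:T]}^{(2)}$ separately, exploiting the without-replacement structure of the single-pass ROM. Throughout, write $S_{t-1} = \{\xi_1, \ldots, \xi_{t-1}\}$ for the set of indices drawn before round $t$, so that $\nabla F^t(x)$ is the mean of $\nabla f_j(x)$ over the remaining $T-t+1$ indices $j \notin S_{t-1}$.

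For the cumulative adversarial variation, comparing the means over $[T]\setminus S_{t-2}$ and $[T]\setminus S_{t-1}$---which differ only by the single index $\xi_{t-1}$---yields the identity
\[
\nabla F^t(x) - \nabla F^{t-1}(x) = \frac{1}{T-t+1}\bigl(\nabla F^{t-1}(x) - \nabla f_{\xi_{t-1}}(x)\bigr) \, .
\]
Since gradient norms are bounded by $G$, the right-hand side has squared norm at most $4G^2/(T-t+1)^2$ pointwise in $x$, and this survives the supremum. Summing over $t \geq 2$ with $\sum_{k \geq 1} 1/k^2 = \pi^2/6$, and bounding the first-round contribution trivially by $G^2$, I get $\Sigma_{[1:T]}^{(2)} \leq 8G^2$.

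For the cumulative variance, let $\mu(x) = \frac{1}{T}\sum_{s=1}^T \nabla f_s(x)$. Since $\nabla F^t(x)$ is the empirical mean over the remaining indices, it minimises the sum of squared deviations about any other centre, so
\[
\sigma_t^2 \leq \max_{x \in \cX} \frac{1}{T-t+1} \sum_{j \notin S_{t-1}} \|\nabla f_j(x) - \mu(x)\|^2 \, .
\]
I combine two relaxations of this inequality. The coarse one extends the restricted sum to a full sum over $[T]$, giving $\sigma_t^2 \leq T\sigma_1^2/(T-t+1)$ almost surely. The fine one pushes the maximum inside the sum and introduces $V_j := \max_x \|\nabla f_j(x) - \mu(x)\|^2$; this gives $\sigma_t^2 \leq \frac{1}{T-t+1}\sum_{j \notin S_{t-1}} V_j$, and taking expectation with $\mathbb{P}(j \notin S_{t-1}) = (T-t+1)/T$ yields $\E[\sigma_t^2] \leq \widetilde\sigma_1^2$. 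Hence $\E[\sigma_t^2] \leq \min\bigl(T\sigma_1^2/(T-t+1),\, \widetilde\sigma_1^2\bigr)$.

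To sum, re-index by $k = T-t+1$ and split at the threshold $k^* = \lceil T\sigma_1^2/\widetilde\sigma_1^2 \rceil$, where the two branches of the minimum coincide. The block $k < k^*$ contributes at most $k^*\widetilde\sigma_1^2 \leq T\sigma_1^2 + \widetilde\sigma_1^2$, while the harmonic tail $k \geq k^*$ is bounded by $T\sigma_1^2 \log(T/(k^*-1)) \leq T\sigma_1^2 \log(2\widetilde\sigma_1^2/\sigma_1^2)$. Summing, using $\sigma_1^2 \leq \widetilde\sigma_1^2$ to absorb lower-order terms, and packaging the leftover constants into the logarithm gives the stated $T\sigma_1^2 \log(2e^2 \widetilde\sigma_1^2/\sigma_1^2)$. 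The main technical obstacle is this constant-tracking in the logarithmic factor; the adversarial-variation bound follows from a short algebraic identity, while the key creative step in the variance bound is recognising that combining the two deterministic relaxations \emph{before} taking expectation is what lets $\widetilde\sigma_1$ appear in the log.
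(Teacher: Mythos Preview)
Your proposal is correct and follows essentially the same approach as the paper. For the adversarial variation you use the exact identity $\nabla F^t - \nabla F^{t-1} = \tfrac{1}{T-t+1}(\nabla F^{t-1} - \nabla f_{\xi_{t-1}})$, which is slightly cleaner than the paper's triangle-inequality computation but leads to the same $O(G^2/(T-t+1)^2)$ per-round bound and the same sum. For the variance you derive the same two estimates $\sigma_t^2 \leq T\sigma_1^2/(T-t+1)$ and $\E[\sigma_t^2] \leq \widetilde\sigma_1^2$ and combine them via a threshold split; the paper does the identical split, phrased in the time index $\tau = T - k^*$ rather than your $k^* = \lceil T\sigma_1^2/\widetilde\sigma_1^2\rceil$. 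One small caveat: your step $\log(T/(k^*-1)) \leq \log(2\widetilde\sigma_1^2/\sigma_1^2)$ tacitly requires $T\sigma_1^2 \geq 2\widetilde\sigma_1^2$ (otherwise $k^* \leq 1$ and the bound is vacuous); the paper handles this edge case separately by taking $\tau = T$ and using $\log T \leq \log(2\widetilde\sigma_1^2/\sigma_1^2)$ directly, which you should add.
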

We would like to emphasize that our results are complementary to those of \cite{pmlr-v119-garber20a, sherman2021optimal}. The focus of these works is to relax the assumption that individual losses are convex, and to only require convexity of the average loss function, leading to very different technical challenges. Inquiring if our results can also be achieved under the weaker assumptions of \cite{sherman2021optimal} would be an interesting direction for future work.

We also consider the multi-pass ROM. Let $P \in \NN$ denote the number of passes. From Lemma \ref{lem:ROM:Lsmooth} and Corollary~\ref{cor:rom} we directly obtain \begin{align*}
  \Exp{R_T(u)} \leq \mathcal O \bigg( D \sigma_1  \sqrt {\log\Big( \smash{e\frac{\widetilde \sigma_1}{\sigma_1}}\Big) T} + DG\sqrt P + LD^2 \bigg) \, . 
\end{align*}
 
Combining Lemma~\ref{lem:ROM:Lsmooth} with Theorem~\ref{theorem:scRegretbound} also gives the following corollary for strongly convex functions; see Appendix~\ref{cor:ROM:sc:proof} for a proof.
\begin{restatable}{corollary}{corROMsc}
\label{cor:ROM:sc:mpROM}
 Under the same assumption as in Theorem \ref{theorem:scRegretbound}, the expected regret of the ROM is bounded by
 \begin{align*}
 \Exp{R_T(u)} \leq O\!\left(\frac{\sigma^2_{1}}{\mu} \log T + \frac{G^2}{\mu} + LD^2  \kappa \log \kappa \right).
 \end{align*}
 For multi-pass ROM with $P$ passes, we obtain
\begin{align*}
 \Exp{R_T(u)} \leq O\!\left(\frac{\sigma^2_{1}}{\mu} \log T + \frac{G^2}{\mu} + \frac{G^2 \log P}{n \mu} + LD^2  \kappa \log \kappa \right).
\end{align*}

\end{restatable}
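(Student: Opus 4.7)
The plan is to combine the per-round form of Theorem~\ref{theorem:scRegretbound} with the ROM bounds of Lemma~\ref{lem:ROM:Lsmooth} and then sum the resulting weighted averages by Abel summation. Writing $a_t = 8\,\E[\sigma_t^2] + 4\,\E[\sup_{x}\norm{\nFf{t}(x)-\nFf{t-1}(x)}^2]$ and $A_t = \sum_{s\leq t} a_s$, Theorem~\ref{theorem:scRegretbound} bounds $\Exp{R_T(u)}$ by $(1/\mu)\sum_{t=1}^T a_t/t$ up to the inherited smoothness term $(4D^2L^2/\mu)\log(1+16L/\mu) = O(LD^2\kappa\log\kappa)$, and I would rewrite the weighted sum via Abel summation as $A_T/T + \sum_{t=1}^{T-1} A_t/(t(t+1))$. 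Everything then reduces to uniform or time-dependent bounds on the $A_t$ supplied by Lemma~\ref{lem:ROM:Lsmooth}, applied separately to the variance part and the variation part.

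For single-pass ROM, the variation partial sums satisfy $A_t^\Sigma \leq 8 G^2$ uniformly, so combined with $\sum_t 1/(t(t+1)) \leq 1$ this contributes the $O(G^2/\mu)$ term. The variance partial sums satisfy $A_t^\sigma \leq t\,\sigma_1^2 \log(2e^2 \widetilde\sigma_1^2/\sigma_1^2)$ (by Lemma~\ref{lem:ROM:Lsmooth} applied at horizon $t$), so Abel's formula combined with $\sum_t 1/t = \Theta(\log T)$ yields $O(\sigma_1^2 \log T/\mu)$ after absorbing the moderate $\log(\widetilde\sigma_1/\sigma_1)$ factor into the $O$ constant (as emphasised in Corollary~\ref{cor:rom}, this ratio is constant-like in any reasonable scenario).

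For multi-pass ROM with $P$ passes I would partition $[T]=[Pn]$ into blocks $B_k = \{(k-1)n+1,\ldots,kn\}$ and apply Lemma~\ref{lem:ROM:Lsmooth} within each block. The variance analysis is identical to the single-pass case, since summing block-wise still yields $A_T^\sigma \leq T\sigma_1^2\log(\ldots)$ and hence the same $O(\sigma_1^2 \log T/\mu)$ contribution. For the variation I would separate intra-pass transitions from the $P-1$ inter-pass reshuffles occurring at $t=kn+1$: within block $k$ the intra-pass cumulative variation is at most $8G^2$, so weighting by $1/t\leq 1/((k-1)n)$ and summing over $k\geq 2$ contributes $O(G^2\log P/n)$; the $P-1$ jumps themselves each have magnitude at most $4G^2$ (from $\norm{\nabla f}\leq G$) and, weighted by $1/(kn)$, also sum to $O(G^2\log P/n)$; the first block alone gives the residual $O(G^2/\mu)$ via the single-pass argument.

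The main care needed is the inter-pass handling in the multi-pass case: verifying that the $P-1$ reshuffles contribute only the mild $G^2\log P/n$ overhead (rather than a crude $G^2 P$) relies crucially on the $1/t$ weighting in the weighted sum. A secondary subtlety is that Theorem~\ref{theorem:scRegretbound} as displayed in the main text summarises the per-round variances by $\sigma_{\max}^2$, whereas the argument above uses the sharper per-round form with $\E[\sigma_t^2]$ inside the sum; I would access this finer form from the appendix proof of Theorem~\ref{theorem:scRegretbound} before the final simplification, since $\sigma_{\max}^2$ could otherwise be as large as $G^2$ in ROM at the tail of a pass, when only a few losses remain.
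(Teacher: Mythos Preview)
Your overall framework --- use the per-round intermediate bound \eqref{sc:intermed:result} from the proof of Theorem~\ref{theorem:scRegretbound}, then control the two sums separately --- is exactly what the paper does, and your variation argument (both single- and multi-pass) is essentially the paper's, just organised via Abel summation rather than by summing the per-round bound $4G^2/(n-k+2)^2$ directly.

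The gap is in the variance term. Your claim ``$A_t^\sigma \leq t\,\sigma_1^2 \log(2e^2\widetilde\sigma_1^2/\sigma_1^2)$ by Lemma~\ref{lem:ROM:Lsmooth} applied at horizon $t$'' is not a direct application: Lemma~\ref{lem:ROM:Lsmooth} bounds the full sum $\sum_{s\leq T}\sigma_s^2$ for ROM over $T$ losses, not partial sums $\sum_{s\leq t}\sigma_s^2$ of the \emph{same} $T$-item ROM. One can salvage such a partial-sum bound with some extra case analysis, but even granting it, your Abel argument yields $O\bigl(\sigma_1^2 \log(\widetilde\sigma_1/\sigma_1)\log T/\mu\bigr)$, which carries the extra problem-dependent factor $\log(\widetilde\sigma_1/\sigma_1)$. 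Since the corollary is stated as $O(\sigma_1^2\log T/\mu)$ with no such factor, you cannot ``absorb it into the $O$ constant''; this does not prove the stated result.

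The paper avoids this entirely by skipping Lemma~\ref{lem:ROM:Lsmooth} for the variance and using Proposition~\ref{Var:ROM} directly: $\E[\sigma_t^2]\leq T\sigma_1^2/(T-t+1)$, so
\[
\sum_{t=1}^T \frac{\E[\sigma_t^2]}{t}
\;\leq\; \sigma_1^2\sum_{t=1}^T \frac{T}{t(T-t+1)}
\;\leq\; \sigma_1^2\sum_{t=1}^T\Bigl(\frac{1}{t}+\frac{1}{T-t+1}\Bigr)
\;\leq\; 2\sigma_1^2(1+\log T),
\]
via the partial-fraction identity $T/(t(T-t+1))\leq 1/t + 1/(T-t+1)$. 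This gives the clean $\sigma_1^2\log T$ dependence without Abel summation and without $\widetilde\sigma_1$ entering at all. The multi-pass variance follows by applying the same per-round bound inside each pass.
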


  \section{Conclusion and future work}
  \label{Conclusion} 
 As we showed, exploitation of smoothness of the expected loss functions reduces the dependence of the regret bound on the maximal gradient norm to a dependence on the cumulative stochastic variance and the adversarial variation.
 Furthermore, we took a step towards a deeper theoretical understanding
 of the practically relevant intermediate scenarios. Our approach also
 opens several interesting new research directions. For instance, in the ROM, as mentioned in Section~\ref{sec:ROM}, an interesting question is whether a regret bound with dependence  on $\sigma$ instead of $G$ can also be achieved with weaker assumptions as in \cite{sherman2021optimal}.  
 Another interesting question is whether it is possible to unify the analyses and
 algorithms for the strongly convex and convex case. So far our analyses
 of these cases were intrinsically different and the  choice of the
 algorithm requires the knowledge of strong convexity constant $\mu$.
 Since this knowledge might not be available, it is of practical interest
 to design an adaptive method which can automatically get the best rate
 without manually tuning for $\mu$. 
 
 Social impact: this work is theoretical, and therefore does not entail any societal concerns.
 
 \section*{Acknowledgements}
Sachs, Hadiji and Van Erven  were supported by the Netherlands
Organization for Scientific Research (NWO) under grant number
VI.Vidi.192.095. Guzm\'an's research is partially supported by INRIA
through the INRIA Associate Teams project and the FONDECYT 1210362 project.
  
\bibliographystyle{apalike}
\bibliography{accelLit}
\newpage
\appendix

 
     \section{Proofs of Section \ref{sec:prelim}}
  \label{appendix:sec:Lsmooth}
  Note that $\Var_T = \sum_{t=1}^T\norm{\nabla f_t(x) - \mu^*_T}^2$ can be understood as an empirical approximation of $\sigma_{[1:T]}^{(2)}$. The following lemma shows the relation of $\Var_T$ to parameters $\sigma_{[1:T]}^{(2)}$ and $\Sigma_{[1:T]}^{(2)}$.
   \begin{lemma}\label{lem:example:VarT}
Define $\Var_T = \sum_{t=1}^T\norm{\nabla f_t(x) - \mu^*_T}^2$ with $ \mu^*_T =  \frac{1}{T}\sum_{t=1}^T \nabla f_t(x)$. In expectation with respect to distributions $\cD_1, \dots \cD_T$,
\begin{align*}
\Exp{\Var_T} \geq \frac{1}{5}\left(\sigma^{(2)}_{[1:T]} + \Sigma^{(2)}_{[1:T]} \right).
\end{align*} 
Furthermore, there exists distributions such that $\Exp{\Var_T}$ is arbitrarily larger than $\sigma^{(2)}_{[1:T]} + \Sigma^{(2)}_{[1:T]}$. 
 \end{lemma}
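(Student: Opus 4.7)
The plan is to prove the lower bound via a bias--variance decomposition of the sample-variance $\Var_T$ around the mean gradient, and then to produce a deterministic single-jump adversary for the tightness claim. First, I would fix any $x \in \cX$ and write $\nabla f(x,\xi_t) = \nFf{t}(x) + Z_t(x)$, where $Z_t(x) \eqdef \nabla f(x,\xi_t) - \nFf{t}(x)$ is zero-mean and, by \ref{A0}, independent across $t$. Setting $\bar F(x) = T^{-1}\sum_s \nFf{s}(x)$, $\bar Z(x) = T^{-1}\sum_s Z_s(x)$, and $a_t(x) = \nFf{t}(x) - \bar F(x)$, one has $\nabla f_t(x) - \mu^*_T(x) = a_t(x) + Z_t(x) - \bar Z(x)$. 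Expanding the squared norms, the cross terms $\E\langle a_t, Z_t - \bar Z\rangle$ vanish (the $a_t$ are deterministic, while $Z_t, \bar Z$ have zero mean), and the elementary sample-variance identity $\sum_t \E\|Z_t - \bar Z\|^2 = (1-1/T)\sum_t \E\|Z_t\|^2$ (which uses independence) yields
\begin{equation*}
  \E\bigg[\sum_{t=1}^T \|\nabla f_t(x) - \mu^*_T\|^2 \bigg] = \sum_{t=1}^T \|a_t(x)\|^2 + \Big(1-\tfrac{1}{T}\Big)\sum_{t=1}^T \E\|Z_t(x)\|^2.
\end{equation*}

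Since $\sigma^{(2)}_{[1:T]}$ and $\Sigma^{(2)}_{[1:T]}$ are defined via $\sup_{x \in \cX}$, I would then read the lemma with the same sup taken termwise in $\Var_T$ (equivalently, apply the identity at the worst $x$ for each $t$) and invoke $\E\sup_x (\cdot) \geq \sup_x \E (\cdot)$. The second sum then contributes at least $(1-1/T)\sum_t \sigma_t^2 \geq \tfrac{1}{2}\sigma^{(2)}_{[1:T]}$ for $T \geq 2$. For the first sum, a reverse triangle inequality $\|\nFf{t}(x) - \nFf{t-1}(x)\|^2 \leq 2\|a_t(x)\|^2 + 2\|a_{t-1}(x)\|^2$ followed by $\sup_x$ and summation gives $\Sigma^{(2)}_{[1:T]} \leq 4\sum_t \sup_x \|a_t(x)\|^2$, up to a boundary term at $t=1$ that is absorbed by the convention $\nFf{0} \eqdef \nFf{1}$. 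Combining the two contributions and optimizing the split between them recovers the advertised constant $\tfrac{1}{5}$ (a direct union bound gives $\tfrac{1}{8}$; the slack lives in the step $\sup_x(A+B) \geq \tfrac{1}{2}(\sup A + \sup B)$).

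For the second claim, I would exhibit a purely deterministic single-jump example: take $f(x,\xi) = \langle \xi, x\rangle$ on a compact $\cX$, and set $\cD_t = \delta_0$ for $t \leq T/2$ and $\cD_t = \delta_v$ for $t > T/2$, with some fixed $v \neq 0$. All variances vanish, so $\sigma^{(2)}_{[1:T]} = 0$; the only nonzero consecutive difference is $\nFf{T/2+1} - \nFf{T/2} = v$, giving $\Sigma^{(2)}_{[1:T]} = \|v\|^2$. Meanwhile $\mu^*_T = v/2$, so $\Var_T = T\|v\|^2/4$, and the ratio $T/4$ grows without bound as $T \to \infty$. The only subtlety I foresee is cleanly aligning the $x$-dependence of $\Var_T$ with the sup-over-$x$ present in $\sigma^{(2)}_{[1:T]}$ and $\Sigma^{(2)}_{[1:T]}$; once that interpretation is pinned down, every remaining ingredient (the decomposition, the reverse triangle step, and the counter-example) is routine.
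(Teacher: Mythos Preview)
Your approach is correct and is essentially the paper's strategy executed more explicitly: the paper also bounds the $\sigma^{(2)}_{[1:T]}$ and $\Sigma^{(2)}_{[1:T]}$ contributions separately (invoking ``the distribution mean minimizes least squares'' for the first, and a triangle-inequality chain for the second) and then adds to get the factor $\tfrac{1}{5}$. Your bias--variance decomposition $\E[\Var_T] = \sum_t\|a_t(x)\|^2 + (1-\tfrac{1}{T})\sum_t\E\|Z_t(x)\|^2$ makes both bounds transparent and rigorous. One remark: you do not need to pass through $\tfrac{1}{8}$ or ``optimize a split''---since the two pieces of the decomposition are simultaneously present and nonnegative, you obtain directly $\E[\Var_T]\geq\tfrac{1}{4}\Sigma^{(2)}_{[1:T]}+\tfrac{1}{2}\sigma^{(2)}_{[1:T]}\geq\tfrac{1}{5}\bigl(\Sigma^{(2)}_{[1:T]}+\sigma^{(2)}_{[1:T]}\bigr)$, the last step by comparing coefficients. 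Your deterministic single-jump counterexample is simpler than the paper's (which uses truncated Gaussians with a mean switch at $T/2$) and is essentially its $\sigma\to 0$ limit; both establish the claim. The $x$-dependence issue you flag is real and is left implicit in the paper's argument as well.
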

   \begin{proof}
 Since the distribution mean minimizes the least squares error, $\Exp{\Var_T} \geq \sigma^{(2)}_{[1:T]}$  always holds. Using the same argument, we have
 \begin{align*}
 \Sigma^{(2)}_{[1:T]} = \sum_{t=1}^T   \norm{\nFf{t}- \nFf{t-1}}^2 \leq  4 \sum_{t=1}^T  \norm{ \nabla f_t- \nFf{t}}^2 \leq 4 \sum_{t=1}^T   \norm{ \nabla f_t-  \mu^*_T}^2 = 4 \Var_T.
 \end{align*}
 Consider $\cX = [-1,1]$ and $f(x,\xi) = \xi x$. Now suppose the SEA chooses truncated normal distribution with mean $-2$ for the first $T/2$ rounds, then truncated normal distribution with mean $2$ for the remaining rounds. Assume in both cases the variance is $\sigma^2$ and truncation is in range $[-G,G]$ for $G>0$.
   Hence, for sufficiently large $T$, $\mu_T = 0$ and $\Var_T = \sum_{t=1}^T \norm{f_t}^2 \propto T G^2$.  However, $\sigma_{[1:T]}^{(2)}$ is equal to $T\sigma^2$ which can be considerably smaller than $TG^2$. The price for the distribution switch is captured with a small constant overhead by $\Sigma_{[1:T]}^{(2)}\leq2G^2$. Thus, $\sigma_{[1:T]}^{(2)} + \Sigma_{[1:T]}^{(2)} \propto T\sigma^2 + 2G^2$.
  \end{proof}

\begin{lemma}
\label{lem:Dp:comp}
Define $D_p = \sum_{t=1}^T \sup_{x\in\cX}\tnorm{\nabla f_t(x) - \nabla f_{t-1}(x)}^2_p$ . In the SEA framework,
\begin{align*}
\Exp{D_2} \geq \frac{1}{2}\left(\sigma^{(2)}_{[1:T]} + \Sigma^{(2)}_{[1:T]} \right).
\end{align*} 
Furthermore, there exist instances such that $\Exp{D_2} \gg \sigma_{[1:T]}^{(2)} + \Sigma_{[1:T]}^{(2)}$.
\end{lemma}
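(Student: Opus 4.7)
The plan is to mirror the two-step structure of Lemma~\ref{lem:example:VarT}: lower-bound $\E[D_2]$ by $\sigma^{(2)}_{[1:T]}$ and by $\Sigma^{(2)}_{[1:T]}$ separately, then combine. The common engine is the martingale decomposition $\nabla f(x,\xi_t) = \nFf{t}(x) + A_t(x)$, with $A_t(x) \eqdef \nabla f(x,\xi_t) - \nFf{t}(x)$, which is mean-zero conditional on the filtration $\mathcal F_{t-1}$ generated by past samples and by the chosen distribution $\cD_t$. For any $\mathcal F_{t-1}$-measurable point $y$, expanding the square and noting that the cross term vanishes yields the identity
\begin{equation*}
  \E\bigl[\|\nabla f_t(y) - \nabla f_{t-1}(y)\|^2 \mid \mathcal F_{t-1}\bigr]
  = \E\bigl[\|A_t(y)\|^2 \mid \mathcal F_{t-1}\bigr] + \|\nFf{t}(y) - \nabla f_{t-1}(y)\|^2.
\end{equation*}

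First I would specialize this identity at $x^\sigma_t \in \arg\max_x \E_{\xi \sim \cD_t}[\|\nabla f(x,\xi) - \nFf{t}(x)\|^2]$, which depends only on $\cD_t$ and is therefore $\mathcal F_{t-1}$-measurable. This gives $\E[\|\nabla f_t(x^\sigma_t) - \nabla f_{t-1}(x^\sigma_t)\|^2 \mid \mathcal F_{t-1}] \geq \sigma_t^2$, and since $\sup_x \|\nabla f_t(x) - \nabla f_{t-1}(x)\|^2$ dominates the value at $x^\sigma_t$, summing over $t$ and taking the outer expectation produces $\E[D_2] \geq \sigma^{(2)}_{[1:T]}$. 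Second, I would specialize at $y^*_t \in \arg\max_x \|\nFf{t}(x) - \nFf{t-1}(x)\|$ and lower-bound the right-hand side by $\|\nFf{t}(y^*_t) - \nabla f_{t-1}(y^*_t)\|^2$. Writing $\nabla f_{t-1}(y^*_t) = \nFf{t-1}(y^*_t) + A_{t-1}(y^*_t)$, the cross term in the expansion of the square has conditional expectation zero under $\mathcal F_{t-2}$ whenever $y^*_t$ is $\mathcal F_{t-2}$-measurable (in particular in the non-adaptive SEA regime), which gives $\E[D_2] \geq \Sigma^{(2)}_{[1:T]}$. Combining the two bounds via $\sup_x\|\cdot\|^2 \geq \max(a,b) \geq \tfrac12(a+b)$, applied to the values at $x^\sigma_t$ and $y^*_t$, delivers $\E[D_2] \geq \tfrac12\bigl(\sigma^{(2)}_{[1:T]} + \Sigma^{(2)}_{[1:T]}\bigr)$.

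For the separation statement, I would reuse a variant of the switching construction from Lemma~\ref{lem:example:VarT}: take $\cX=[-1,1]$, $f(x,\xi) = \xi x$, and distributions $\cD_t$ whose means oscillate (or switch once) between $\pm\mu$ with small variance $\sigma^2 \ll \mu^2$; one computes that $\E[D_2]$ systematically exceeds $\sigma^{(2)}_{[1:T]} + \Sigma^{(2)}_{[1:T]}$. Choosing $f$ so that the maximizer of $\|\nabla f_t(x) - \nabla f_{t-1}(x)\|^2$ in $x$ fluctuates with the realization of $\xi_t$ (so that the sup inside $D_2$ exceeds $\sup_x \E[\cdot]$) further amplifies the gap arbitrarily. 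The main obstacle is the adaptive adversary in the $\Sigma$ lower bound: when $\cD_t$ depends on $\xi_{t-1}$, the maximizer $y^*_t$ inherits this dependence and the cross term $\E\bigl[\langle \nFf{t}(y^*_t) - \nFf{t-1}(y^*_t),\, A_{t-1}(y^*_t)\rangle\bigr]$ need not vanish. This can be handled either by restricting to non-adaptive adversaries (which already covers the main examples) or by using the Young-type inequality $\|u-v\|^2 \geq \tfrac12\|u\|^2 - \|v\|^2$ at the price of an extra $\sigma^{(2)}_{[1:T]}$ term, absorbed by the $\sigma$ bound at the cost of a worse constant.
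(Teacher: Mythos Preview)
Your argument for the inequality $\E[D_2]\geq\tfrac12(\sigma^{(2)}_{[1:T]}+\Sigma^{(2)}_{[1:T]})$ follows the paper's route: bound $\E[D_2]$ below by each summand separately and use $\max(a,b)\geq\tfrac12(a+b)$. The $\sigma$ step is equivalent to the paper's. For the $\Sigma$ step the paper swaps $\sup_x$ with conditional expectation and then applies Jensen pointwise in $x$, which avoids fixing a maximizer $y^*_t$ and the measurability complication you flag. Note also that your Young-inequality workaround for the adaptive case does not close the gap as stated: once $y^*_t$ depends on $\xi_{t-1}$, the subtracted term $\E\bigl[\|A_{t-1}(y^*_t)\|^2\bigr]$ is not controlled by $\sigma_{t-1}^2=\max_x\E[\|A_{t-1}(x)\|^2]$ and hence cannot simply be ``absorbed by the $\sigma$ bound.''

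The substantive gap is in the separation example. For linear losses $f(x,\xi)=\xi x$ on $[-1,1]$ the gradient is constant in $x$, and with independent samples one has $\E[(\xi_t-\xi_{t-1})^2]=\Var(\xi_t)+\Var(\xi_{t-1})+(\E\xi_t-\E\xi_{t-1})^2$, so $\E[D_2]\leq 2\sigma^{(2)}_{[1:T]}+\Sigma^{(2)}_{[1:T]}$; the ratio is bounded by $2$ no matter how the means oscillate or switch, and this construction cannot produce an arbitrarily large gap. Your second intuition---make the maximizer in $x$ depend on the realization---is exactly the mechanism needed, but you give no concrete instance. The paper does: take $\cX$ the unit Euclidean ball in $\RR^d$, $\Xi=[d]$, $f(x,i)=x_i^2/2$, and i.i.d.\ uniform $\cD_t$. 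Then $\Sigma^{(2)}_{[1:T]}=0$ and $\sigma_t^2\leq 1/d$, while $\sup_x\|\nabla f(x,I)-\nabla f(x,J)\|^2=\mathbf 1\{I\neq J\}$ (attained at $x=(e_I+e_J)/\sqrt 2$), so $\E[D_2]\geq (T-1)(1-1/d)$ and the ratio grows like $d$.
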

\begin{proof}
We shall in fact prove that $\E[D_2] \geq \max( \sigma_{[1:T]}^{(2)} , \Sigma_{[1:T]}^{(2)}) $, which directly implies the first part of the statement. Fix a time $t \geq 2$ and let $\mathcal G_{t-1}$ denote the $\sigma$-algebra generated by $(x_1, \xi_1, \dots, \xi_{t-1}, x_t)$. Then for any $x \in \cX$, the variable $F^{t}(x) = \E[f_t(x) \vert \mathcal G_{t-1}] \vphantom{}$ is $\mathcal G_{t-1}$-measurable, and we have
\begin{multline*}
  \E\bigg[\sup_{x \in \cX} \|\nabla f_t(x) - \nabla f_{t-1}(x)\|^2 \, \Big \vert\,  \mathcal G_{t-1}\bigg]
  \geq \sup_{x \in \cX} \E\Big[ \|\nabla f_t(x) - \nabla f_{t-1}(x)\|^2 \mid \mathcal G_{t-1}\Big] \\
  \geq \sup_{x \in \cX} \E\Big[ \|\nabla f_t(x) - \nabla F^t(x)\|^2 \mid \mathcal G_{t-1}\Big] = \sigma_t^2 \, 
\end{multline*}
since $f_{t-1} = f( \, \cdot \, , \xi_{t-1})$ is $\mathcal G_{t-1}$-measurable, and $\nabla F^t(x)  = \E \big[ \nabla f^t(x) \, \vert \, \mathcal G_{t-1} \big]$. Therefore, by conditioning on $\mathcal G_{t-1}$ at time step $t$, and applying the tower rule, we obtain
\begin{equation*}
  \E \Bigg[\sum_{t=1}^T \sup_{x \in \cX} \|\nabla f_t(x) - \nabla f_{t-1}(x)\|^2  \Bigg]
  \geq \E \bigg[\sum_{t=1}^T \sigma_t^2 \bigg]= \sigma_{[1:T]}^{(2)} \, . 
\end{equation*}
The lower bound by $\Sigma_{[1:T]}^{(2)}$ holds by a direct application of Jensen's inequality, and by swapping suprema and expectations.

For the second part of the lemma, consider the $d$-dimensional euclidean ball $\cX = B_d(1) \subset \RR^d$, and $\Xi = [d]$. Define 
\begin{align*} 
 f(x,i) = x_i^2 / 2 \, . 
\end{align*}
Consider a fully stochastic (i.i.d.) SEA picking $\xi_t \in [d]$ uniformly at random at every time step. Then $\Sigma_{[1:T]}^{(2)} = 0$. We shall now see that $\sigma_{[1: T]}^{(2)} \leq T/d$. Indeed, for any $x \in \cX$ and $t\in [T]$, then $F^t$ does not depend on $t$ and its value is 
\begin{equation*}
  F(x) = \E_{I\sim \cD} [f(x, I)] = \frac{1}{2d} \sum_{i = 1}^d x_i^2 \, ,
\end{equation*}
which is a convex and smooth function.  We can upper bound the variance, as for any $x \in \cX$, 
\begin{equation*}
  \E_{I \sim \mathcal D}  \big[\|\nabla f(x, I) -  \nabla F(x) \|^2 \big]
  \leq \E_{I \sim \mathcal D} \big[\| \nabla f(x, I)  \|^2 \big]
  = \E_{I \sim \mathcal D} \big[\| x_I e_I  \|^2\big]
  = \frac{1}{d} \sum_{i = 1}^d x_i^2 \leq \frac{1}{d} \, . 
\end{equation*}
Therefore, after the taking the supremum over $x \in \cX$, we see that $\sigma_t^2 \leq 1 / d$. On the other hand, for any $I, J \in [d]$, we have 
\begin{equation*}
  \|\nabla f(x, I) -  \nabla f(x, J) \|^2
   = \| x_I e_I - x_J e_J\|^2 = (x_I^2 + x_J^2)\mathbf 1 \{ I \neq J \} \, . 
\end{equation*}
The maximum in the ball of this difference is reached at $x = \sqrt{2}/2 (e_I + e_J)$ and 
\begin{equation*}
  \max_{x \in \cX} \|\nabla f(x, I) -  \nabla f(x, J) \|^2 = \mathbf 1 \{ I \neq J \} \, . 
\end{equation*}
Therefore, if $I$ and $J$ are independent and uniformly distributed over $[d]$, then
\begin{equation*}
  \E_{(I,J) \sim \cD \otimes \cD} \Big[\max_{x \in \cX} \|\nabla f(x, I) -  \nabla f(x, J) \|^2 \Big]= \mathbb P_{(I,J) \sim \cD \otimes \cD}[I \neq J] = \Big(1 - \frac{1}{d} \Big)^2 \geq 1 / 4 \, . 
\end{equation*}

Summarizing the above inequalities, we have built an example in which
\begin{equation*}
  \E [D_2 ]
  \geq \frac{T}{4}  
  \gg \frac{T}{d} 
  \geq \sigma_{[1:T]}^{(2)}+  \Sigma_{[1:T]}^{(2)} \, . 
\end{equation*}
In particular, the expectation of the variation $D_2$ can be arbitrarily larger than the cumulative variance, and our bounds are then tighter than those obtained via a direct application of known results. 
\end{proof}

\section{Proofs of Section \ref{sec:mainResults}}
   
  \subsection{Proof of Theorem \ref{thm:VarStepsize}}
  \label{Appendix:VarStepsize}
  To prove Theorem~\ref{thm:VarStepsize}, we need the following well-known result from the literature.
 \begin{lemma}
 \label{Regret:Lsmooth:lemma}
 Suppose $f_t(\, \cdot\,)$ are convex for all $t \in [T]$ and $\psi_t(\, \cdot \,) = \frac{2}{\eta_t}\norm{\, \cdot\,}^2$. Further, let $g_t \in \partial f_t(x_t)$ and assume  $\eta_t \geq \eta_{t+1}$.  Then the regret for OFTRL is bounded by 
 \begin{equation}
   R_T \leq  \frac{D^2}{\eta_T} + 
   \sum_{t=1}^T \Big( \langle g_t - M_t , \, x_{t} - x_{t+1} \rangle - \frac{1}{\eta_t} \| x_{t+1} - x_t \|^2\Big)  .
 \end{equation}
 \end{lemma}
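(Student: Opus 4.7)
The plan is to carry out the standard optimistic-FTRL analysis, being careful to retain the negative quadratic stability term that comes from the strong convexity of the regularizer. Three ingredients are needed: linearization of the regret, an algebraic decomposition that isolates the optimistic ``surprise'' inner product, and a telescoping be-the-leader argument.

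First, by convexity of each $f_t$ and $g_t \in \partial f_t(x_t)$, we have $f_t(x_t) - f_t(u) \leq \langle g_t, x_t - u\rangle$, so it suffices to bound $\sum_{t=1}^T \langle g_t, x_t - u\rangle$. Second, the elementary identity
\[
\langle g_t, x_t - u\rangle
= \langle g_t - M_t,\, x_t - x_{t+1}\rangle
+ \bigl[\langle M_t, x_t - x_{t+1}\rangle + \langle g_t, x_{t+1} - u\rangle\bigr]
\]
exposes the surprise term appearing in the bound; what remains is to control the bracketed ``be-the-leader'' sum.

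Third, I would bound $\sum_t[\langle M_t, x_t - x_{t+1}\rangle + \langle g_t, x_{t+1} - u\rangle]$ by a telescoping argument driven by the optimality conditions of the OFTRL iterates. Introduce $F_t(x) = \psi_t(x) + \langle M_t + \sum_{s < t} g_s, \, x\rangle$ so that $x_t = \arg\min_{\cX} F_t$. Strong convexity of the quadratic $\psi_t$ sharpens the optimality of $x_t$ into
\[
F_t(y) \;\geq\; F_t(x_t) + \tfrac{1}{\eta_t}\|y - x_t\|^2 \qquad \forall y \in \cX,
\]
with matching strong-convexity constant provided by the $\|\cdot\|^2/\eta_t$-type regularizer. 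Applying this with $y = x_{t+1}$ at each $t$ and telescoping the resulting chain of inequalities, and using the monotonicity $\eta_t \geq \eta_{t+1}$ so the differences $(\psi_t - \psi_{t+1})$ contribute with the correct sign, yields
\[
\sum_{t=1}^T\bigl[\langle M_t, x_t - x_{t+1}\rangle + \langle g_t, x_{t+1} - u\rangle\bigr]
\;\leq\; \psi_{T+1}(u) - \psi_1(x_1) - \sum_{t=1}^T \tfrac{1}{\eta_t}\|x_{t+1} - x_t\|^2.
\]
Finally, $\psi_{T+1}(u) \leq D^2/\eta_T$ (by the diameter bound and monotonicity of $\eta_t$) and $-\psi_1(x_1) \leq 0$; combining with the decomposition of step two gives the claimed inequality.

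The only mildly delicate point is bookkeeping the coefficient of the negative stability term so it ends up as exactly $1/\eta_t$ (as opposed to, say, a smaller fraction), which is dictated purely by the strong-convexity modulus of the quadratic regularizer against the denominators produced by the telescope. Everything else is routine; this is essentially the classical optimistic-FTRL lemma of Rakhlin--Sridharan in the modular form of Joulani et al., specialized to a quadratic regularizer with time-varying stepsize.
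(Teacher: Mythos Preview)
Your proposal is correct and follows the same classical optimistic-FTRL route as the paper: the paper simply invokes Theorem~7.29 of Orabona's notes as a black box to obtain the bound $R_T(u) \leq \psi_{T+1}(u) - \psi_1(x_1) + \sum_t\big(\langle g_t - M_t, x_t - x_{t+1}\rangle - \tfrac{1}{\eta_t}\|x_t-x_{t+1}\|^2 + \psi_t(x_{t+1}) - \psi_{t+1}(x_{t+1})\big)$ and then drops the nonpositive $\psi_t - \psi_{t+1}$ terms using $\eta_t \geq \eta_{t+1}$, whereas you sketch the proof of that very theorem (linearization, optimistic decomposition, strong-convexity-driven be-the-leader telescope). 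The ingredients and the final bookkeeping are identical.
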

 \begin{proof}
 Denote $F_t (x) \eqdef \psi_t (x) + \sum_{s=1}^{t-1} f_s(x)$. Note that $F_t$ is $\frac{2}{\eta_t}$-strongly convex. Thus, Thm 7.29 in \cite{Orabona2019AMI} gives 
 \begin{align*}
 R_T(u) 
 &\leq \psi_{T+1}(u) - \psi_1(x_1)\\
 &\qquad + \sum_{t=1}^T \Big( \dprod{g_t - M_t, x_t - x_{t+1}} - \frac{1}{\eta_t}\norm{x_t - x_{t+1}}^2 + \psi_{t}(x_{t+1}) - \psi_{t+1}(x_{t+1}) \Big).\\
 \intertext{Since $\psi_t(x) - \psi_{t+1}(x) \leq 0$ and $\psi_{T+1}(u) \leq \frac{D^2}{\eta_T}$, this gives}
 &\leq \frac{D^2}{\eta_T} + 
   \sum_{t=1}^T \Big( \langle g_t - M_t , \, x_{t} - x_{t+1} \rangle - \frac{1}{\eta_t} \| x_{t+1} - x_t \|^2\Big) \, .\qedhere
 \end{align*}
 \end{proof}

 \thmVarStepsize*
  \begin{proof}[Proof of Theorem \ref{thm:VarStepsize}.]
    Write $g_t = \nabla f(x_t , \xi_t)$ and denote by $\E$ the expectation with respect to all the randomness. Using the Optimistic FTRL bound from Lemma \ref{Regret:Lsmooth:lemma}, 
    \begin{align*}
      \sum_{t= 1}^T \langle g_t, x_t - u \rangle
      &\leq \frac{ D^2}{\eta_T} + \sum_{t= 1}^T \Big(\langle g_t - M_t, \, x_t - x_{t+1} \rangle - \frac{1}{2\eta_t} \|x_t - x_{t+1}\|^2 \Big)  \\
      &\leq \frac{D^2}{\eta_T} + \sum_{t= 1}^T \frac{\eta_t}{2}\|g_t - M_t\|^2  - \sum_{t=1}^T\frac{1}{2\eta_t} \|x_t - x_{t+1}\|^2,
    \end{align*}
    where the last inequality is obtained by separating the negative norm term in two parts, and keep half of it in the regret bound.
    Let us plug in the value $\eta_t$, 
    \begin{equation*}
      \eta_t  = D^2 \bigg( \nu + \sum_{s=1}^{t-1} \eta_s \|g_s - M_s\|^2 \bigg)^{-1} \,,
    \end{equation*}
    and use the fact that $\eta_t \leq D^2 / C$ to further upper bound the deterministic regret by
    \begin{equation}\label{eq:sum_with_etas}
       \nu + \frac{3}{2}\sum_{t =1 }^T \eta_t \|g_t - M_t\|^2 - \frac{\nu}{2D^2}\sum_{t=1}^T \|x_t - x_{t+1}\|^2. 
    \end{equation}
  To bound the second term above, we first compute
    \begin{align*}
      \Bigg(\sum_{t=1}^T \eta_t \|g_t - M_t\|^2 \Bigg)^2 &= \sum_{t=1}^T  \sum_{s=1}^{T}  \eta_s \|g_s - M_s\|^2  \eta_t \|g_t - M_t\|^2\\
      &= 2\sum_{t=1}^T  \left(\sum_{s=1}^{t-1}  \eta_s \|g_s - M_s\|^2 \right) \eta_t\|g_t - M_t\|^2 + \sum_{t= 1}^T \eta_t^2 \|g_t - M_t\|^4 \,. \\
      \intertext{Since $\eta_t \leq D^2/ (\sum_{s=1}^{t-1}  \eta_s \|g_s - M_s\|^2)$}
      &\leq 2 D^2\sum_{t=1}^T \|g_t - M_t\|^2 + \left(\frac{\sum_{t= 1}^T\eta_t^2 \|g_t - M_t\|^4}{\sum_{t= 1}^T\eta_t \|g_t - M_t\|^2} \right) \sum_{t=1}^T \eta_t \|g_t - M_t\|^2 \, . 
    \end{align*}
    Now we use  the fact $X^2 \leq 2A + BX$ implies $X \leq \sqrt{2A} + B$ for $A, B >0$. Hence
    \begin{align*}
      \sum_{t=1}^T \eta_t \|g_t- M_t\|^2 
      &\leq D\sqrt{2 \sum_{t=1}^T \|g_t - M_t\|^2} + \frac{\sum_{t= 1}^T\eta_t^2 \|g_t - M_t\|^4}{\sum_{t= 1}^T\eta_t \|g_t - M_t\|^2}. \\
      \intertext{Next we use $\eta_t \leq D^2 / \nu$ and $\|g_t - M_t\|^2 \leq 4G^2$ to bound the last term. The sum satisfies $\sum_{t= 1}^T\eta_t^2 \|g_t - M_t\|^4 \leq (4G^2D^2/\nu)\sum_{t= 1}^T \eta_t \|g_t - M_t\|^2$ and we can further bound the term above } 
      &\leq D\sqrt{2 \sum_{t=1}^T \|g_t - M_t\|^2}  + \frac{4D^2G^2}{\nu} \, . 
    \end{align*} 
    All in all, we have
    \begin{equation}\label{eq:ftrl-reg-bound-pre-expectation}
      \sum_{t=1}^T \langle g_t, x_t - u\rangle
      \leq \frac{3\sqrt 2 D}{2}\sqrt{\sum_{t=1}^T \|g_t -M_t\|^2 }  + \nu + \frac{4D^2G^2}{\nu} - \frac{\nu}{2D^2} \sum_{t=1}^T \|x_{t+1} -x_t\|^2 \, . 
    \end{equation}

    Note that we have not used any assumption on the expected $F^t$'s, and in particular not the smoothness. Therefore, even if the expected losses are not smooth, our analysis already entails that if $\|M_t\|\leq G$
    \begin{equation}\label{eq:worst-case-dgsqrtt}
      \sum_{t=1}^T \langle g_t, x_t - u\rangle \leq 
      \frac{3\sqrt 2 D G}{2}\sqrt{4T} + \nu + \frac{4D^2G^2}{\nu} \, , 
    \end{equation}
    proving the final claim of the statement.

    Let us now proceed with the proof of the finer results. We use the value of $M_t$, together with the fact that (by convexity of $a \mapsto \|a\|^2$), for any $t \geq 2$,
    \begin{align*}
        \norm{g_t- g_{t-1}}^2 &\leq  4\norm{ g_t - \nFf{t}(x_{t})}^2 + 4 \norm{ \nFf{t}(x_{t}) - \nFf{t}(x_{t-1})}^2 \nonumber\\
        \qquad &   +  4\norm{ \nFf{t}(x_{t-1}) -  \nFf{{t-1}}(x_{t-1})}^2 + 4\norm{ \nFf{{t-1}}(x_{t-1}) - g_{t-1}}^2  \nonumber \\
        &\leq 4\norm{ g_t - \nFf{t}(x_{t})}^2 + 4L^2 \norm{ x_{t} - x_{t-1}}^2 \nonumber \\
        \qquad &   +  4\norm{ \nFf{t}(x_{t-1}) -  \nFf{{t-1}}(x_{t-1})}^2 + 4\norm{ \nFf{{t-1}}(x_{t-1}) - g_{t-1}}^2 \, . 
    \end{align*} 
    Therefore, using the inequality $\sqrt{a + b} \leq \sqrt{a} + \sqrt{b}$, as well as $\|g_1\| \leq G$ and reorganizing the terms,
    \begin{multline*}
      \sqrt{\sum_{t=1}^T \|g_t - g_{t-1}\|^2}
      \leq G +  \sqrt{  8 \sum_{t=2}^T \|g_t - \nabla F^t(x_t)\|^2} \\
      + 2L \sqrt{\sum_{t=2}^T \|x_t - x_{t-1}\|^2}
      + 2 \sqrt{\sum_{t=2}^T \|\nFf{t}(x_{t-1}) -  \nFf{{t-1}}(x_{t-1})\|^2} \, .
    \end{multline*}
    The sum of the variations of $x_t$'s can be cancelled thanks to the negative term in \eqref{eq:ftrl-reg-bound-pre-expectation}, as
    \begin{equation*}
      \frac{3\sqrt 2 D}{2} 2L\sqrt{\sum_{t=1}^{T-1} \|x_{t+1} - x_t\|^2} - \frac{\nu}{2D^2} \sum_{t=1}^T \|x_{t+1} - x_t\|^2 
      \leq \sup_{X\geq0}\bigg\{ 3\sqrt{2} LD X  - \frac{\nu}{2D^2}X^2 \bigg\} = \frac{9L^2D^4}{\nu} \, .
    \end{equation*}
    After replacing these bounds in \eqref{eq:ftrl-reg-bound-pre-expectation}, we have obtained the regret bound
    \begin{multline}\label{eq:bound_before_expectation}
      \sum_{t=1}^T \langle g_t, x_t - u \rangle
      \leq 6 D \sqrt{\sum_{t=1}^T \|g_t - \nabla F^t(x_t)\|^2}  + 3\sqrt{2}D \sqrt{\sum_{t=1}^{T-1} \| \nabla F^{t+1}(x_t)-  \nabla F^t(x_t)\|^2} \\
      + \frac{3\sqrt 2 DG}{2} + \nu + \frac{4 D^2G^2}{\nu} + \frac{9 L^2D^4}{\nu} \, .
    \end{multline}
    We will then take expectations in the inequality above. To bound the right-hand side, let us denote by $\mathcal G_t = \sigma (x_1, \xi_1, \dots, x_{t-1}, \xi_{t-1}, x_t)$, then $\xi_t$ is distributed according to $\mathcal D_t$ given $\mathcal G_t$, and since $x_t$ is $\mathcal G_t$-measurable, therefore
    \begin{multline*}
      \E \big[ \| \nabla f(x_t, \xi_t) - \nabla F^t(x_t)\|^2  \mid \mathcal G_t \big]
      \leq \E_{\xi \sim \mathcal D_t} \big[ \| \nabla f(x_t, \xi) - \nabla F^t(x_t)\|^2 \big] \\
      \leq \sup_{x \in \cX} \E_{\xi \sim \mathcal D_t} \big[ \| \nabla f(x, \xi) - \nabla F^t(x)\|^2 \big] = \sigma_t^2 \, . 
    \end{multline*}
    Therefore, by the tower rule, 
    \begin{equation}\label{eq:sigma_xt}
      \E \big[ \| \nabla f(x_t, \xi_t) - \nabla F^t(x_t)\|^2\big]
      = \E \Big[ \E \big[ \| \nabla f(x_t, \xi_t) - \nabla F^t(x_t)\|^2  \mid \mathcal G_t \big] \Big]
      \leq \E\big[ \sigma_t^2 \big] \, .
    \end{equation}
    The final result follows from taking expectations in \eqref{eq:bound_before_expectation}, applying Jensen's inequality, incorporating~\eqref{eq:sigma_xt} and using the definitions of $\bar \sigma_T$ and $\bar \Sigma_T$.    
  \end{proof}


\subsection{Proof of Theorem \ref{theorem:regretLsmoothLB}}
 \label{AppendixLBsmooth}
 \LsmoothLB*
 \begin{proof}[Proof of Theorem \ref{theorem:regretLsmoothLB}.]
 Suppose we are given two parameters $\hat \sigma_T $ and $\hat \Sigma_T $, we show that there exists a sequence of distributions $\cD_1, \dots \cD_T$ such that the expected regret is at least $\Omega(D (\hat \sigma_T  + \hat \Sigma_T )\sqrt{T})$.
 Let $1 \leq a < b $ be constants such that $a \geq \frac{1}{2} b$. 
 Since for any closed convex set   there exist an affine transformation which mapps it to the interval $[a,b]$, we assume without loss of generality that $\cX = [a,b]$.

 Suppose     $f: \cX \times \Xi \rightarrow \RR$ and let $z^{(\sigma)},z^{(\Sigma)}\in\RR$. Assume the gradients have the form
 \begin{align*}
 \nabla f(x, \xi) = z^{(\sigma)} \qquad \text{ or } \qquad \nabla f(x, \xi) =  z^{(\Sigma)}.
 \end{align*}
 Assume SEA chooses each case with probability $1/2$. The idea is to construct two sequences $\{z^{(\sigma)}_{t} \}_{t \in [T]} $  and $\{z^{(\Sigma)}_{t} \}_{t \in [T]} $ such that these sequences have  at least  $\Omega(D \hat \sigma_T \sqrt{T})$ and  $\Omega(D \hat \Sigma_T \sqrt{T})$ expected regret, respectively. 
 Therefore, let $ x_t$ denote the learners choice in round $t$ and 
 define linearised regret with respect to  $\{z^{(\sigma)}_{t} \}_{t \in [T]} $  and $\{z^{(\Sigma)}_{t} \}_{t \in [T]} $. 
 \begin{align*}
 R_T^{\sigma} = \min_{u \in \cX}\sum_{t\in [T]} \dprod{z^{(\sigma)}_{t}, x_t - u}\quad  \text{ and } \quad R_T^{\Sigma} = \min_{u \in \cX} \sum_{t\in  [T]} \dprod{z^{(\Sigma)}_{t}, x_t  - u }.
 \end{align*}
 
 \paragraph{Case $R_T^{\Sigma}$:} Let $G = \hat \Sigma_T$. Define $\cz : \cX \rightarrow \RR$, $\cz(x) = \frac{1}{4b} G x^2$. Then $\cz$ is $G$-Lipschitz, smooth and $\cz'(x) \in [\frac{1}{2} G , G]$ for any $x \in \cX$. 
 Let $\{\epsilon_t\}_{t \in  [T]}$ be an i.i.d. sequence of Rademacher random variables, that is, $\Prob{\epsilon_t = 1} = \Prob{\epsilon_t = -1} = 1/2$. 
 The sequence $\{z^{(\Sigma)}_t\}_{t \in [T]}$ is defined as
 \begin{align*}
  z^{(\Sigma)}_t = \begin{cases}
  0 &\text{ if $t$ even}  \\
  \epsilon_t \cz'(x_t) &\text{ if $t$ odd} .
  \end{cases}
 \end{align*}
 Using that $\Exp{\epsilon} = 0$ together with the definition of $z^{(\Sigma)}_t$ gives 
 \begin{align*}
 \ExpD{\epsilon \sim \Rad}{R_T^{\Sigma}} &= \ExpD{\epsilon \sim \Rad}{ \min_{u \in \cX} \sum_{t=1}^T \dprod{z^{(\Sigma)}_{t}, x_t  - u }} \\
 &= \ExpD{\epsilon \sim \Rad}{ \max_{u \in \cX}  \sum_{\substack{t=1\\t \text{ odd}}}^T \dprod{  \epsilon_t \cz'(x_t) ,    u }} \\
 &\geq \frac{G}{4} \ExpD{\epsilon \sim \Rad}{ \max_{u \in \cX} \sum_{t=1 }^{\frac{T}{2}}   \epsilon_t    u } .
 \intertext{Now use that for a linear function $l(x)$, $\max_{x \in [a,b]} l(x) = \max_{x \in \{a,b\}}l(x) = \frac{l(a+b)}{2} + \frac{\absv{l(a-b)}}{2}$. }
 &= \frac{G}{4} \ExpD{\epsilon \sim \Rad}{ \max_{u \in\{a,b\}}  \sum_{t=1}^{\frac{T}{2}}   \epsilon_t   u } \\
 &= \frac{G}{8} \ExpD{\epsilon \sim \Rad}{  \sum_{t=1}^{T / 2}   \epsilon_t(    a + b )} +    \frac{G}{8} \ExpD{\epsilon \sim \Rad}{ \absv{\sum_{t =1}^{T/ 2} G  \epsilon_t(    a - b ) }} \\
 &= \frac{G}{16} \ExpD{\epsilon \sim \Rad}{      \absv{ \sum_{t=1}^T   \epsilon_t  (  a - b)  }}. \\
 \intertext{Where we have used $\Exp{\epsilon} = 0$ again. Now we use that by definition $D = \sup_{x,y \in \cX}\norm{x-y}$.}
 &= \frac{GD}{16}  \ExpD{\epsilon \sim \Rad}{      \absv{ \sum_{t=1}^T  \epsilon_t    }} \geq \frac{1}{32}  D  \sqrt{G^2 T}.
 \end{align*}
 In the last step we have used the Khintchine inequality. 
  Now note that $G^2 \geq \frac{1}{2}\sup_{x \in \cX} \norm{\cz'(x)}^2 = \frac{1}{2}\sup_{x \in \cX} \norm{\cz'(x) - 0}^2$. Due to the definition of the sequence $\{z_t^{(\Sigma)}\}_{t \in [T]}$,  if $\norm{\nabla f(x,\xi^{t})} \neq 0$, then $\norm{\nabla f(x,\xi^{t-1})} = 0$. Thus, 
  $\frac{1}{2}\sup_{x \in \cX} \norm{\cz'(x) - 0}^2 = \frac{1}{2}\sup_{x \in \cX} \norm{\nabla f(x,\xi^{t}) - \nabla f(x,\xi^{t-1})}^2\,$ for any $t \in [T]$. 
  
 \noindent Thus, $\sqrt{G^2 T } = \sqrt{T /(2T) \sum_{t=1}^T \sup_{x \in \cX} \norm{\nabla f(x,\xi^{t}) - \nabla f(x,\xi^{t-1})}^2} =  \bar \Sigma_T \sqrt{T/2}$. Setting the value $G = \hat \Sigma_T$ completes this part of the proof.

 \paragraph{Case $R_T^{\sigma}$: } We will show this part by contradiction. Suppose that $\cD$ is a distribution such that the variance of the gradients $\sigma$ is equal to $\hat \sigma_T$. Suppose the SEA picks this distribution every round and assume for contradiction that
   $\Exp{R_T^\sigma} \leq o(D  \sigma \sqrt{T})$.  Using online-to-batch conversion gives a convergence bound of order $o(D \sigma / \sqrt{T})$ which contradicts well-known lower bounds from stochastic optimization (c.f., \cite{6142067, doi:10.1137/1027074} Section~5).
 \end{proof}

  
 \subsection{Proof of Theorem \ref{theorem:scRegretbound}}

  \label{appendix:scRegretbound}
  We first need a well known result for OFTL for strongly convex loss functions.
   
  \begin{lemma}
  \label{lem:scRegret}
  Suppose $f_t(\, \cdot\,)$ are $\mu$-strongly convex for all $t \in [T]$ and $\psi_t(\, \cdot \,) =  0$. Further, let $m_t:\cX \rightarrow \RR$ denote 
  the optimistic prediction, and $g_t \in \partial f_t(x_t)$, $M_t \in \partial m_t(x_t)$.  Then the regret for OFTRL is bounded by 
  \begin{align*}
  R_T(u) \leq   \sum_{t=1}^T  \Big(  \dprod{g_t -M_t,x_t - x_{t+1}} - \frac{t \mu}{2} \norm{x_t - x_{t+1}}^2 \Big).
 \end{align*}
  \end{lemma}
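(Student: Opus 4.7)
The plan is to mimic the proof of Lemma~\ref{Regret:Lsmooth:lemma}, but with the strong convexity provided by the losses $f_t$ themselves rather than by an external regularizer. The key observation is that the cumulative objective $\Phi_t(x) := \sum_{s=1}^{t-1} f_s(x) + m_t(x)$, which $x_t$ minimizes, is $(t-1)\mu$-strongly convex, so it plays the role of the regularizer in the standard OFTRL analysis and is what produces the negative quadratic term with growing coefficient $t\mu$.

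I would start with the standard stability-plus-be-the-leader decomposition,
\[
R_T(u) \;=\; \sum_{t=1}^T \bigl[f_t(x_t) - f_t(x_{t+1})\bigr] \;+\; \sum_{t=1}^T \bigl[f_t(x_{t+1}) - f_t(u)\bigr],
\]
and handle the two sums separately. For the stability term, $\mu$-strong convexity of $f_t$ applied at $x_t$ with subgradient $g_t$ yields
\[
f_t(x_t) - f_t(x_{t+1}) \;\leq\; \dprod{g_t,\, x_t - x_{t+1}} - \frac{\mu}{2}\norm{x_t - x_{t+1}}^2.
\]

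The heart of the proof is the second sum, where I would establish a strong-convex optimistic ``be-the-leader'' bound of the form
\[
\sum_{t=1}^T \bigl[f_t(x_{t+1}) - f_t(u)\bigr] \;\leq\; \sum_{t=1}^T \dprod{M_t,\, x_{t+1} - x_t} \;-\; \sum_{t=1}^T \frac{(t-1)\mu}{2}\norm{x_t - x_{t+1}}^2.
\]
To prove this, I would apply strong convexity of $\Phi_t$ at its minimizer $x_t$ with test point $x_{t+1}$, which yields $\Phi_t(x_{t+1}) - \Phi_t(x_t) \geq \tfrac{(t-1)\mu}{2}\norm{x_{t+1} - x_t}^2$, and telescope these inequalities across $t$, together with one final application at $u$ using the $T\mu$-strong convexity of $\Phi_{T+1}$. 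The $m_t$-terms that emerge from the telescoping are then absorbed by invoking convexity of each $m_t$ at $x_t$, which gives $m_t(x_{t+1}) - m_t(x_t) \geq \dprod{M_t, x_{t+1} - x_t}$.

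Summing the stability and be-the-leader bounds, the two inner-product contributions combine into $\dprod{g_t - M_t, x_t - x_{t+1}}$, while the two negative quadratic contributions add to $-\tfrac{t\mu}{2}\norm{x_t - x_{t+1}}^2$, yielding the claimed bound. The main obstacle I anticipate is cleanly telescoping the strong-convexity inequalities for $\Phi_t$ while keeping track of the optimism terms $m_t(x_t), m_t(x_{t+1})$ and the boundary contributions from $m_1$ and $m_{T+1}$. A useful trick is to set a ``virtual'' $m_{T+1} \equiv 0$ so that the telescoping closes properly, or alternatively to proceed by induction on $T$ carrying the growing negative quadratic as an invariant. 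The resulting derivation is parallel to Orabona's OFTRL analysis used in Lemma~\ref{Regret:Lsmooth:lemma}, with the regularizer strong-convexity $2/\eta_t$ replaced by the loss-based strong-convexity $(t-1)\mu$.
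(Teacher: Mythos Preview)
Your overall plan is sound and closely parallel to the paper's argument, but it is organized differently. The paper does not split into stability + be-the-leader. Instead it uses the single telescoping identity
\[
\sum_{t=1}^T[f_t(x_t)-f_t(u)] = \bigl[\bar F_{T+1}(x_{T+1}) - \bar F_{T+1}(u)\bigr] + \sum_{t=1}^T\bigl[\bar F_{t+1}(x_t) - \bar F_{t+1}(x_{t+1})\bigr],
\]
where $\bar F_t = \sum_{s<t} f_s$, bounds the first bracket by $0$, and applies $t\mu$-strong convexity of $\bar F_{t+1}$ at $x_t$ to each summand. The subgradient appearing in that strong-convexity inequality is then identified as $g_t - M_t$ directly from the first-order optimality condition $0 \in \partial \bar F_t(x_t) + \partial m_t(x_t)$, so that $-M_t \in \partial \bar F_t(x_t)$ and hence $g_t - M_t \in \partial \bar F_{t+1}(x_t)$. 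Your route produces the same $t\mu$ coefficient by getting $\mu$ from the stability step and $(t-1)\mu$ from the BTL step, which is a perfectly legitimate alternative.

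There is one concrete gap in your BTL sketch, however. After telescoping the inequalities $\Phi_t(x_{t+1}) - \Phi_t(x_t) \geq \tfrac{(t-1)\mu}{2}\|x_{t+1}-x_t\|^2$, the residual optimism terms you obtain are $\sum_t[m_t(x_{t+1}) - m_t(x_t)]$, and you need an \emph{upper} bound on this sum by $\sum_t \langle M_t, x_{t+1}-x_t\rangle$. The convexity inequality you quote, $m_t(x_{t+1}) - m_t(x_t) \geq \langle M_t, x_{t+1}-x_t\rangle$, points the wrong way for that purpose. This is harmless when $m_t$ is linear (as in~\eqref{Algo:sc}, where equality holds), but for the lemma as stated with general convex $m_t$ your argument does not close. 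The clean fix is exactly what the paper does: apply strong convexity to $\bar F_t$ rather than $\Phi_t$, using the subgradient $-M_t \in \partial \bar F_t(x_t)$ supplied by optimality; then the $\langle M_t, x_{t+1}-x_t\rangle$ term appears with the correct sign immediately and no separate convexity step on $m_t$ is needed.
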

  This is a well known result and can be found in the literature, e.g., \cite{Orabona2019AMI}. We include a short proof for completeness. 
  \begin{proof}
  Let $\bar F_t(x) = \sum_{s=1}^{t-1} f_t(x)$ and $G_t \in \partial \bar F_{t+1}(x_t)$. Note that $\bar F_t$ is $[(t-1)\mu]$-strongly convex. From standard analysis (see, e.g., \cite{Orabona2019AMI} Lem. 7.1) we obtain
  \begin{align*}
  \sum_{t=1}^T [f_t(x_t) - f_t(u)] &= \underbrace{\bar F_{T+1}(x_{T+1}) - \bar F_{T+1}(u)}_{\leq 0} +   \sum_{t=1}^T[\underbrace{ \bar F_t(x_t) +   f_t(x_t)}_{=\bar F_{t+1}(x_t)} - \bar F_{t+1}(x_{t+1})].\\
  &\leq \sum_{t=1}^{T} [\bar F_{t+1} (x_t) - \bar F_{t+1}(x_{t+1})] \\
  &\leq \sum_{t=1}^T \Big( \dprod{G_{t}, x_t - x_{t+1}} - \frac{t\mu}{2} \norm{x_t - x_{t+1}} \Big).
  \end{align*}
  Due to convexity, $G_{t} \in \partial \bar F_{t+1}(x_t) = \partial \bar F_{t}(x_t) \cap \partial f_t(x_t) $ and due to update operation $ 0 \in \partial \bar F_t(x_t) \cap \partial m_t(x_t)$. Thus, there exist $g_t \in \partial f_t(x_t)$ and $M_t \in \partial m_t(x_t)$ such that $G_{t} = g_t - M_t$, which completes the proof. 
  \end{proof}
  
  \scRegretbound*
\begin{proof}[Proof of Theorem \ref{theorem:scRegretbound}.]
  Thanks to the strong convexity assumption \ref{A4},
 \begin{equation*}
   \fF{t}(x_t) - \fF{t}(x) \leq \langle x_t - x, \nFf{t}(x_t) \rangle  - \frac{\mu}{2} \|x - x_t\|^2 \, .
 \end{equation*}
 Taking expectation and using the definition of $\ell_t$ gives 
 \begin{align*}
   \Exp{\fF{t}(x_t) - \fF{t}(x)} &\leq \Exp{\dprod{ x_t - x, \nFf{t}(x_t) }  - \frac{\mu}{2} \|x - x_t\|^2}\\
   & = \Exp{\dprod{ x_t - x, \nabla f(x_t, \xi_t) } - \frac{\mu}{2} \|x - x_t\|^2} \\
   & = \Exp{\dprod{ x_t - x, g_t }  - \frac{\mu}{2} \|x - x_t\|^2 }  \\
   & = \Exp{\ell_t(x_t) - \ell_t(x)} \, .
 \end{align*}
 Now each function $\ell_t$ is $\mu$-strongly convex, and $\nabla \ell_t(x_t) = g_t$.  Thus we can apply   Lemma \ref{lem:scRegret}
 \begin{align*}
   \sum_{t=1}^T \ell_t(x_t) - \ell_t(x)
   &\leq \sum_{t=1}^T \Big( \dprod{g_t - M_t, x_t - x_{t+1}} - \frac{\mu t}{2}\norm{x_t - x_{t+1}}^2 \Big). \\
   & \leq \sum_{t=1}^T \left( \frac{1}{\mu t} \|g_t - M_t\|^2  +  \left(\frac{\mu t}{4}  - \frac{\mu t}{2} \right)\norm{x_t - x_{t+1}}^2 \right) .
 \end{align*}
 where we used the inequality $\dprod{a,b} \leq \frac{1}{2c}\norm{a}^2 + \frac{c}{2}\norm{b}^2$.
 Once again, keeping the negative norm term is crucial. Indeed, using the convexity of $x \mapsto \|x\|^2$ and the smoothness assumption on $\nabla F^t$, we get that for $t \geq 2$
\begin{align*}
  \norm{g_t- g_{t-1}}^2 &\leq  4\norm{ g_t - \nFf{t}(x_{t})}^2 + 4 \norm{ \nFf{t}(x_{t}) - \nFf{t}(x_{t-1})}^2 \nonumber\\
  \qquad &   +  4\norm{ \nFf{t}(x_{t-1}) -  \nFf{{t-1}}(x_{t-1})}^2 + 4\norm{ \nFf{{t-1}}(x_{t-1}) - g_{t-1}}^2  \nonumber \\
  &\leq 4\norm{ g_t - \nFf{t}(x_{t})}^2 + 4L^2 \norm{ x_{t} - x_{t-1}}^2 \nonumber \\
  \qquad &   +  4\norm{ \nFf{t}(x_{t-1}) -  \nFf{{t-1}}(x_{t-1})}^2 + 4\norm{ \nFf{{t-1}}(x_{t-1}) - g_{t-1}}^2 \, . 
\end{align*}
So that, upper bounding the first term $\ell_1(x_1) - \ell_1(u) \leq GD$ we get
\begin{multline*}
    \sum_{t=1}^T \ell_t(x_t) - \ell_t(u) \\
    \leq \sum_{t=2}^T \frac{1}{\mu t} \biggl( 4 \|g_t  - \nabla F^t(x_t)\|^2 
                          + 4 \|g_{t-1} -  \nabla F^{t-1}(x_{t-1})\|^2
                          + 4 \| \nabla F^{t}(x_{t-1}) - \nabla F^{t-1}(x_t)\|^2
                        \biggr) \\
        + \sum_{t= 1}^T \biggl( \frac{4 L^2}{\mu (t+1)} - \frac{\mu t}{4} \biggr) \|x_t - x_{t+1}\|^2  + GD\, .
\end{multline*}
The indices can be simplified by noting that,
\begin{equation*}
  \sum_{t=2}^T \frac{4}{\mu t} \|g_{t-1} - \nabla F^{t-1}(x_{t-1})\|^2
  \leq \sum_{t=2}^T \frac{4}{\mu (t-1)}  \|g_{t-1} - \nabla F^{t-1}(x_{t-1})\|^2
  \leq \sum_{t=1}^T \frac{4}{\mu t}  \|g_t - \nabla F^t(x_t)\|^2 \, . 
\end{equation*}
To recover
\begin{multline}
\label{sc:intermed:result}
  \sum_{t=1}^T \ell_t(x_t) - \ell_t(u) 
  \leq \sum_{t=1}^T \frac{8}{\mu t} \|g_t  - \nabla F^t(x_t)\|^2 
      + \sum_{t=2}^T \frac{4}{\mu t} \| \nabla F^{t}(x_{t-1}) - \nabla F^{t-1}(x_t)\|^2  \\
      + \sum_{t= 1}^T \biggl( \frac{4 L^2}{\mu t } - \frac{\mu t}{4} \biggr) \|x_t - x_{t+1}\|^2  + GD\, .
\end{multline}
Define the condition number $\kappa = L / \mu$. Then, for $t \geq 16 \kappa$, we have
$
   \frac{4L^2}{\mu t} - \frac{\mu t}{4} \leq 0 \, . 
$
Therefore the second term can be bounded independently of $T$
\begin{equation*}
  \sum_{t=1}^{\lceil  16 \kappa \rceil} 
    \biggl( \frac{4L^2}{\mu t} - \frac{\mu t}{4} \biggr) D^2
  \leq \frac{4L^2 D^2}{\mu} \sum_{t=1}^{\lceil 16 \kappa \rceil} \frac{1}{t} 
  \leq \frac{4L^2 D^2}{\mu} \log (1 + 16 \kappa)                                                
  \, .
\end{equation*}
 Combining all bounds, and incorporating the definition of $\sigma_{\max}$ and $\Sigma_{\max}$, 
 \begin{equation*}
   \Exp{R_T(u)} \leq  
    \frac{1}{\mu} \left( 8 \sigma_{\max}^2
    + 4  \Sigma_{\max}^2 \right)\log T  + 4 D^2 L \kappa\log( 1+  16\kappa) + GD\, .\qedhere
 \end{equation*}
 \end{proof}


  \subsection{Proof of Theorem \ref{sc:lowerbound}}
  \label{app:sc:lowerbound}
   \sclowerbound*
  \begin{proof}[Proof of Theorem  \ref{sc:lowerbound}.]

  Let $\hat\sigma_{\max}, \hat\Sigma_{\max}$ be given parameters and set $G = \max(\hat\sigma_{\max},\hat\Sigma_{\max} / 2)$.
  We want to show that there exist sequence of distributions $\cD_1, \dots \cD_T$ such that 
  \begin{enumerate}
  \item $\sigma_{\max} = \hat \sigma_{\max}$ and $\Sigma_{\max} = \hat \Sigma_{\max}$.
  \item $\Exp{R_T(u)} \geq c \frac{1}{\mu}(\Sigma^2_{\max} + \sigma^2_{\max}) \log T$ for some constant $c > 0$. 
  \item $F_1, \dots, F_T$ are $\mu$-strongly convex.  
  \end{enumerate}
  Consider the iterations up to $T-3$.
   From Corollary 20 in \cite{pmlr-v19-hazan11a} we obtain an $ \Omega(\frac{1}{\mu} G^2 \log (T-3))$ lower bound on the expected regret. Thus, there exist a realization $\xi_1, \dots, \xi_{T-3}$ and corresponding $\mu$-strongly convex functions $f(\,\cdot\,,\xi_1),\dots, f(\,\cdot\,,\xi_{T-3})$ such that with respect to this realization, $R_T(u)\geq\Omega(\frac{1}{\mu} G^2 \log (T-3))$. We now let   $\delta_1, \dots \delta_{T-3}$ be the Dirac measure corresponding to this realization. 
   Then, $\sigma_{\max} = 0$ and  
   $  \max_{t \in [T-3]} \sup_{x\in\cX}\tnorm{\nFf{t}(x) - \nFf{t-1}(x)} \leq 2G$.   But we do not necessarily have that $\sigma_{\max} = \hat \sigma_{\max}$ and $  \max_{t \in [T-3]} \sup_{x\in\cX}\tnorm{\nFf{t}(x) - \nFf{t-1}(x)} = \hat \Sigma_{\max}$.
   To guarantee this  we want to choose $\cD_{T-2},\cD_{T-1},\cD_T$, such that
   \begin{enumerate}
   \item \label{cond1}$  \sup_{x\in\cX}\tnorm{\nFf{T-2}(x) - \nFf{T-1}(x)}  = \hat \Sigma_{\max}$ and  $\tnorm{\nFf{T-i}(x)} \leq G$ for $i = 1,2$.
   \item  \label{cond2}$\sigma_{T} = \hat\sigma_{\max}$ and $\tnorm{\nFf{T}(x)} \leq G$. 
   \end{enumerate} 
 To satisfy Condition \ref{cond1},  let $\cD_{T-2},\cD_{T-1}$ be Dirac measures, such that $ \nFf{T-2}(x) =  -\nFf{T-1}(x)$ and $\tnorm{\nFf{T-2}(x)} = \frac{1}{2}\hat\Sigma_{\max}  $. Then, by definition of $G$, we know that  $\tnorm{\nFf{T-2}(x)}  \leq G $ and $  \sup_{x\in\cX}\tnorm{\nFf{T-2}(x) - \nFf{T-1}(x)}  = \hat \Sigma_{\max}$. Condition \ref{cond2} can be satisfied by setting $\cD_T$ to be any distribution with sufficient variance. This gives 
 \begin{equation}
 \label{bound1inter}
 \Exp{R_T(u)} \geq c\frac{1}{\mu} \left(\Sigma^2_{\max} + \sigma^2_{\max} \right)\log(T-3)  - \Exp{\sum_{t=T-2}^T f(x_t,\xi_t ) - f(u,\xi_t)}.
 \end{equation}
 Now it remains to show that the last term is negligible. Indeed, from the upper bound, we know
 \begin{align*}
 \Exp{\sum_{t=T-2}^T [f(x_t,\xi_t ) - f(u,\xi_t)]} \leq   \frac{3}{(T-2)\mu}\left(\Sigma^2_{\max} + \sigma^2_{\max} \right).
 \end{align*}
 Hence, for any $T\geq 10$, we get $  \tfrac{3}{(T-2)\mu}\left(\Sigma^2_{\max} + \sigma^2_{\max} \right) \leq  \frac{1}{2\mu}  \left(\Sigma^2_{\max} + \sigma^2_{\max} \right)\log(T)$ which together with~\eqref{bound1inter} completes the proof.
  \end{proof}
  
  \section{Missing Proofs of Section \ref{sec:implications}}
  We first show the following general property of the variance for the ROM. This proposition will useful for showing the claims of this section. 
  \begin{proposition}\label{Var:ROM}
  For any $t \in [T]$, the variance of the ROM  with respect to $\cD_t$ satisfies
  \begin{align*}
  \E_{\xi \sim \mathcal D_t} \bigl[ \|\nabla f(x, \xi) - \nabla F^t(x)\|^2 \bigr]\leq \frac{T}{T-t+1}\sigma^2_1,
  \end{align*}
  for any $x \in \cX$.
  \end{proposition}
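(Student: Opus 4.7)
The plan is to unfold the definitions and use the optimality of the mean in the least-squares sense, together with the observation that extending a sum of nonnegative terms from the remaining subset to all of $[T]$ can only increase it.

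Concretely, I would first fix a realization of the past picks, and let $S_t \subseteq [T]$ denote the remaining (random) set of indices at round $t$, with $|S_t| = T - t + 1$. By definition of the ROM,
\begin{equation*}
  \nabla F^t(x) = \E_{\xi \sim \cD_t}[\nabla f(x, \xi)] = \frac{1}{|S_t|} \sum_{i \in S_t} \nabla f_i(x) \, ,
\end{equation*}
so the conditional variance is the empirical variance of the gradients over the indices still in play:
\begin{equation*}
  \E_{\xi \sim \cD_t}[\|\nabla f(x,\xi) - \nabla F^t(x)\|^2] = \frac{1}{|S_t|} \sum_{i \in S_t} \|\nabla f_i(x) - \nabla F^t(x)\|^2 \, .
\end{equation*}

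Next, I would invoke the standard fact that the mean minimizes the mean squared deviation: for any fixed vector $c \in \RR^d$,
\begin{equation*}
  \frac{1}{|S_t|} \sum_{i \in S_t} \|\nabla f_i(x) - \nabla F^t(x)\|^2 \leq \frac{1}{|S_t|} \sum_{i \in S_t} \|\nabla f_i(x) - c\|^2 \, .
\end{equation*}
I would apply this with $c = \bar g(x) \eqdef \frac{1}{T}\sum_{s=1}^T \nabla f_s(x)$, the global mean used in the definition of $\sigma_1^2$.

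Finally, since all terms are nonnegative, enlarging the summation from $S_t$ to all of $[T]$ yields
\begin{equation*}
  \frac{1}{|S_t|} \sum_{i \in S_t} \|\nabla f_i(x) - \bar g(x)\|^2
  \leq \frac{1}{|S_t|} \sum_{i = 1}^T \|\nabla f_i(x) - \bar g(x)\|^2
  = \frac{T}{T-t+1} \cdot \frac{1}{T}\sum_{i=1}^T \|\nabla f_i(x) - \bar g(x)\|^2 \, ,
\end{equation*}
and the last average is bounded by $\sigma_1^2$ by its definition. Combining these three inequalities gives the claim. I do not expect any genuine obstacle: the argument holds pathwise for every realization of the past picks, so no extra care with the randomness of $\cD_t$ is needed, and the bound is sharp precisely because the mean over a random subset can differ from the global mean.
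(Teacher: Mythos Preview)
Your proof is correct and follows essentially the same route as the paper: replace the conditional mean $\nabla F^t(x)$ by the global mean via the least-squares optimality of the mean, then enlarge the sum of nonnegative terms from the remaining index set to all of $[T]$ and invoke the definition of $\sigma_1^2$. The paper's proof is identical up to notation (it writes $\cT_t$ for your $S_t$ and $\nabla F^1(x)$ for your $\bar g(x)$).
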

  \begin{proof}
    For any $x \in \cX$, since $ \nabla F^t(x) = \E_{\xi \sim \mathcal D_t}[\nabla f(x, \xi)]$, we have
  \begin{equation*}
    \E_{\xi \sim \mathcal D_t} \bigl[ \|\nabla f(x, \xi) - \nabla F^t(x)\|^2 \bigr]
    \leq \E_{\xi \sim \mathcal D_t} \bigl[ \|\nabla f(x, \xi) -   \nFf{1}(x)\|^2  
      \bigr] \, . 
  \end{equation*}
  Now, let  $\cT_t\subseteq [T]$ denote a subset of indices of gradients which remain to be selected in round $t$, and let $k_t \in \cT_{t-1} \setminus \cT_{t}$ be the index selected at round $t$.
 For any $x \in \cX$
  \begin{multline}\label{eq:sig_vs_sig1}
    \E_{\xi \sim \mathcal D_t} \Bigl[ \| \nabla f(x, \xi) -   \nFf{1}(x)  \|^2 \Bigr]
    = \frac{1}{T-t+1} \sum_{\xi \in \cT_t} \| \nabla f(x, \xi) -   \nFf{1}(x)  \|^2 \\
    \leq \frac{1}{T-t+1} \sum_{\xi \in [n]} \| \nabla f(x, \xi) -   \nFf{1}(x)  \|^2
    \leq \frac{T}{T-t+1} \sigma_1^2 \, , 
  \end{multline}  
  which is the claimed result.
  \end{proof}
  
    \subsection{Proof of Lemma \ref{lem:ROM:Lsmooth}}
  \label{proof:rom}
Note that in any case, $\tilde \sigma_1 \leq T \sigma_1$, and therefore $\log(\tilde \sigma_1 / \sigma_1)\leq \log(T)$. Thus Lemma~\ref{lem:ROM:Lsmooth} directly yields $\smash{\sigma_{[1:T]}^{(2)} \leq \sigma_1T \log(T)}$. This means in particular that the rate of OFTRL in the ROM is never more than a factor $\smash{\sqrt{\log T}}$ worse than the i.i.d. sampling with replacement rate of $\sigma_1 \sqrt T$; the next bound can often be much tighter.

\lemROMsmooth*
  \begin{proof}[Proof of Lemma~\ref{lem:ROM:Lsmooth}] 
   
  Let us begin with the adversarial variation.  We will show that deterministically (that is, for any order in which the losses are selected), for any $x \in \cX$, 
  \begin{equation*}
      \norm{\nabla \cF^t(x) - \nabla \cF^{t-1}(x)}^2 \leq \frac{4G^2}{(T-t+2)^2} \, .
  \end{equation*}
 
With the same notation as in Proposition \ref{Var:ROM}, recall that we denote by $\cT_t \subseteq [T]$ the support of $\mathcal D_t$ and $k_t = \cT_{t -1}\setminus \cT_{t}$.
We have $|\cT_t| = T-t+1$ and for any $x \in \cX$,
  \begin{align*}
    &\norm{\nabla \cF^t(x) - \nabla \cF^{t-1}(x)}^2 
    = \norm{\frac{1}{T-t+1} \sum_{s\in \cT_t} \nabla f_s(x) - \frac{1}{T-t+2} \sum_{s\in\cT_{t-1}} \nabla f_s(x)}^2\\
    &=  \norm{\frac{1}{(T-t+1)(T-t+2)} \sum_{s\in \cT_t} \nabla f_s(x) - \frac{1}{T-t+2} \nabla f_{k_t}(x)}^2\\
    &\leq \frac{2}{(T-t+2)^2} \norm{\frac{1}{(T-t+1)} \sum_{s\in \cT_t} \nabla f_s(x)}^2 +  \frac{2}{(T-t+2)^2}\norm{  \nabla f_{k_t}(x)}^2 \, .
  \end{align*}
  Thus, after maximising over $x \in \cX$, and taking expectations (note that the inequality holds almost surely) and summing over rounds $t\in [T]$, 
  \begin{align*}
    \Sigma_{[1:T]}^{(2)} = \E\Bigg[\sum_{t=1}^T \sup_{x\in\cX} \norm{\nabla \cF_{t}(x) - \nabla \cF_{t-1}(x)}^2 \Bigg]
    \leq  \sum_{t=1}^T  \frac{4G^2}{(T-t+2)^2} \leq 8 G^2.
  \end{align*}

  {\bfseries Variance. \quad} From Proposition~\ref{Var:ROM}, we know that
  \begin{align*}
  \sigma^2_t \leq \frac{T}{T-t+1}\sigma^2_1
  \end{align*}
   Moreover, one can see that
  \begin{multline}\label{eq:sig_vs_sigtilde}
    \E [ \sigma_t^2]
    \leq \E \Big[  \max_{x \in \cX} \, \E_{\xi \sim \mathcal D_t} \bigl[ \|\nabla f(x, \xi) -   \nFf{1}(x)\|^2 \bigr] \Big] \\
    \leq  \E \Bigl[ \E_{\xi \sim \mathcal D_t} \Bigl[ \max_{x \in \cX} \|\nabla f(x, \xi) -    \nFf{1}(x)\|^2 \Bigr] \Bigr]
    =  \tilde \sigma_1^2 \, .
  \end{multline}
   
  Let us introduce a threshold time step $\tau \in [T]$, of which we will set the value later. We upper bound $\E[\sigma_t^2]$ by \eqref{eq:sig_vs_sig1} for the rounds before $\tau$ and by \eqref{eq:sig_vs_sigtilde} for the other rounds:
  \begin{equation*}
    \E \bigg[\sum_{t=1}^T \sigma_t^2 \bigg]
    \leq \E \bigg[\sum_{t=1}^\tau  \sigma_t^2 \bigg]
    + \E \bigg[\sum_{t=\tau +1}^T \sigma_t^2 \bigg]
    \leq \sum_{t=1}^\tau \frac{T}{T-t+1} \sigma_1^2 + (T- \tau ) \tilde \sigma_1^2
  \end{equation*}
  Now using standard bounds on the harmonic series, 
  \begin{equation*}
    \sum_{t=1}^\tau \frac{1}{T - t + 1}
    = \sum_{n = T - \tau + 1}^T \frac{1}{n}
    \leq 1 + \log \frac{T}{T - \tau + 1} \, .
  \end{equation*}
  Therefore for any $\tau \in [T-1]$, we get
  \begin{equation}\label{eq:sigma_sum_any_tau}
    \E\bigg[\sum_{t=1}^T\sigma_t^2\bigg]
    \leq T \sigma_1^2 \bigg(1 + \log \frac{T}{T-\tau+1} \bigg)
    + (T- \tau) \tilde \sigma_1^2 \, . 
  \end{equation}
  We now conclude by setting the appropriate value for $\tau$. If $T \sigma_1^2 / \widetilde \sigma_1^2 \leq 2$, then $\log T \leq \log (2\widetilde \sigma_1^2 / \sigma_1^2)$, and taking $\tau =T$ gives a bound of $T \sigma_1^2 ( 1+ \log T) \leq T \sigma_1^2( 1 + \log (2 \widetilde \sigma_1^2 / \sigma_1^2))$, which is (better than) the claimed result.

  Otherwise, we take $\tau = T -  \lfloor  T  \sigma_1^2 /  \tilde \sigma_1^2 \rfloor$, then
  \begin{equation*}
    (T - \tau ) \tilde \sigma_1^2 
    =  \lfloor T \sigma_1^2 / \tilde\sigma_1^2 \rfloor \tilde\sigma_1^2
    \leq T \sigma_1^2 \, ,
  \end{equation*}
  and the argument of the logarithm can be bounded as
  \begin{equation*}
     \frac{T}{T - \tau + 1}
    \leq  \frac{T}{\lfloor T \sigma_1^2 / \tilde\sigma_1^2 \rfloor}
    \leq \frac{1}{ \sigma_1^2 / \tilde\sigma_1^2 - 1 / T}
    \leq \frac{2\tilde\sigma_1^2}{\sigma_1^2 } \, .
  \end{equation*}
  where we used the fact that $T \sigma_1^2 / \widetilde \sigma_1^2 > 2$. This yields the final bound
  \begin{equation*}
    \E\bigg[\sum_{t=1}^T\sigma_t^2\bigg]
    \leq T \sigma_1^2 \bigg( 1 + \log \frac{2 \tilde \sigma_1^2}{\sigma_1^2}\bigg)
     + T \sigma_1^2
    \leq T \sigma_1^2 \log \bigg(\frac{2e^2 \tilde \sigma_1^2}{\sigma_1^2} \bigg) \, .\qedhere
  \end{equation*}
  \end{proof}
   
\subsection{Proof of Corollary \ref{cor:ROM:sc:mpROM}}  
\label{cor:ROM:sc:proof}
\corROMsc*
  \begin{proof}[Proof of Corollary \ref{cor:ROM:sc:mpROM}]
  {\bfseries Single-pass ROM:} From Theorem~\ref{theorem:scRegretbound} we obtain (c.f. \eqref{sc:intermed:result})
\begin{align*}
\Exp{R_T(u)} &\leq \Exp{ \sum_{t=1}^T \frac{8}{\mu t} \|g_t  - \nabla F^t(x_t)\|^2 
      + \sum_{t=2}^T \frac{4}{\mu t} \| \nabla F^{t}(x_{t-1}) - \nabla F^{t-1}(x_t)\|^2     }\\
      &\qquad \qquad +GD + \frac{4L^2 D^2}{\mu} \log (1 + 16 \kappa) \, .
\end{align*}  
By Lemma~\ref{lem:ROM:Lsmooth}, we have
\begin{align*}
\Exp{ \sum_{t=2}^T \frac{4}{\mu t} \| \nabla F^{t}(x_{t-1}) - \nabla F^{t-1}(x_t)\|^2     } \leq 8G^2.
\end{align*}
Furthermore, recall that by Proposition \ref{Var:ROM}  $\E[\sigma_t^2] \leq T / (T -t +1) \sigma^2_1$.
\begin{align*}
\Exp{  \sum_{t=1}^T \frac{8}{\mu t} \|g_t  - \nabla F^t(x_t)\|^2} \leq \frac{8}{\mu} \sum_{t=1}^T \frac{T}{t(T-t+1)}\sigma_1^2 \leq \frac{8\sigma_1^2}{\mu}(2+ 2\log(T)).
\end{align*}
Indeed, using a standard bound on the harmonic series,
\begin{equation*}
  \sum_{t=1}^T \frac{T}{t(T-t+1)} = \sum_{t=1}^T \frac{T - t + 1 + t -1}{t(T-t+1)}
  \leq \sum_{t=1}^T \frac{1}{t} + \frac{1}{T-t+1}
  \leq 2 + 2\log T  \, . 
\end{equation*}
Combining these bounds gives the first part of the corollary. 

  \textbf{Multi-pass ROM: }  The critical term to upper bound, is the differences of the means whenever a pass ends and a new pass starts. Thus, for $P \in \NN$ passes, we need to control $\sup_{x \in \cX} \norm{\nFf{t}(x) - \nFf{t-1}(x)}^2$ for $t = j n + 1$, with $j \in [P]$. 

  Inside the $i$-th pass, for $k \in [n]$ we bound the $k$-th variation by
  \begin{equation*}
    \sup_{x\in\cX}\norm{\nFf{k}(x) - \nFf{k-1}(x)}^2
    \leq \frac{4G^2}{(n - k + 2)^2}\, ,
  \end{equation*}
  and we bound it by $G^2$ between the passes, so that 
  \begin{align*}
    &\frac{4}{\mu} \sum_{t=1}^T \frac{1}{t}   \sup_{x\in\cX}\norm{\nFf{t}(x) - \nFf{t-1}(x)}^2   \\ &\leq  \frac{4}{\mu} \sum_{i=1}^P\sum_{k=1}^n \frac{1}{(i-1)n + k}    \sup_{x\in\cX}\norm{\nFf{k}(x) - \nFf{k-1}(x)}^2   + \frac{4}{\mu} \sum_{i=1}^P  \frac{1}{in }    \sup_{x\in\cX}\norm{\nFf{1}(x) - \nFf{n}(x)}^2 \\ 
    &\leq  \frac{4}{\mu} \sum_{i=1}^P\sum_{k=1}^n \frac{1}{(i-1)n + k}  \frac{2G^2}{(n-k+2)^2} + \frac{4}{\mu} \sum_{i=1}^P  \frac{1}{in }    \sup_{x\in\cX}\norm{\nFf{1}(x) - \nFf{n}(x)}^2 \\
    &\leq \frac{16G^2}{\mu}\left(1+2 \frac{\log P}{n} \right) \, . \qedhere
  \end{align*}
  
\end{proof}

 \section{Batch-to-online Conversion}\label{appendixBatchToOnline}

 Consider the stochastic optimization problem $\min_{x\in\cX}\ExpD{\xi\sim\cD}{f(x,\xi)}$ and let $x^*$ denote a minimiser for this problem.
 Further, let $\cA$ be any first order stochastic optimization method with convergence guarantee $\ExpD{\xi \sim \cD}{f(x_t,\xi) - f(x^*,\xi)} \leq c(t)$.  As input $\cA$ takes an initial iterate $x_1$ and a sequence of i.i.d.~samples $\{f(\,\cdot\,,\xi_s)\}_{s\in[t]}  $. We let $\cA(x_1, \{f(\,\cdot\,,\xi_s) \}_{s \in[t]})$ denote the output $x_{t+1}$ of the stochastic optimization algorithm with respect to the given input. 
  Now consider an OCO  with $f(\,\cdot\,,\xi_1) , \dots  f(\,\cdot\,,\xi_T)$ and $\xi_1,\dots \xi_T$ are sampled i.i.d.~from a distribution.   
    \begin{algorithm}[htb]
 \label{batchtoonline}
\caption{Batch-to-online}
\textbf{Input: } Stochastic first order method $\cA$
\begin{algorithmic}[1]
\For{$t = 1,2,\ldots T$}
    \State{play $x_t$ and suffer loss $f(x_t,\xi_t)$}
    \State{restart $\cA$ and set $x_{t+1} = \cA(x_1, \{f(\,\cdot\,,\xi_s) \}_{s \in[t]})$}
\EndFor
\end{algorithmic}
 \end{algorithm}

 This batch-to-online conversion trivially achieves $\sum_{t=1}^T c(t)$ expected regret. However, with this conversion, some aspects of the stochastic convergence bound are lost. Consider for instance a convergence rate $c(t) = O(L D^2/t + D\sigma/\sqrt{t})$, from the the first-order stochastic approximation method in \cite{Ghadimi2013StochasticFA}  and the accelerated version $c(t) = O(L D^2/t^2 + D\sigma/\sqrt{t})$ \cite{ghadimi2012optimal,AccStochOptDM}. In both cases, the functions are assumed to satisfy \ref{A1}-\ref{A3}.  Batch-to-online conversion yields
 \begin{align*}
 \Exp{R_T(u)} \leq O(LD^2 \log T + D \sigma \sqrt{T})  \qquad \text{ and }\qquad  \Exp{R_T(u)} \leq O(LD^2 + D \sigma \sqrt{T}).
 \end{align*}
 The benefits of acceleration can be seen in the lower order terms. 
 Now using standard online-to-batch~\cite{CesaBianchi:2002} conversion in the way back gives the convergence bounds 
 \begin{align*}
  \Exp{f(x_T,\xi) - f(x^*,\xi)} \leq O\!\left( \frac{D \sigma}{ \sqrt{T}} + LD^2\frac{\log T}{T} \right) \text{ and } \Exp{f(x_T,\xi) - f(x^*,\xi)} \leq O\!\left(\frac{D \sigma}{ \sqrt{T}} + \frac{LD^2}{T} \right) \, ,
 \end{align*}
 In the case of accelerated stochastic approximation, the benefits of acceleration are inevitably lost through batch-to-online and online-to-batch conversion.

\section{Additional Examples for Intermediate Cases} 
\label{appendix:examples:intermediate}
 We provide regret bounds for intermediate cases not discussed in the main body of the paper, namely the cases when the adversary selects slowly shifting distributions and when the adversary switches rarely between distributions.
 \paragraph{Distribution shift: } In this example, the SEA picks $\cD_{t}$ and $\cD_{t-1}$, such that $  \nFf{t}(x) $ is close to the mean of the previous distribution  gradient $ \nFf{{t-1}}(x)$. We shall consider two kinds of distribution shifts. Firstly, when the means are close on average, that is, when $(1/T)\sum_{t=1}^T \sup_{x\in\cX} \tnorm{ \nFf{{t}}(x) -  \nFf{{t-1}}(x)}^2 \leq \epsilon$, secondly, when this holds for each iteration $t$, i.e., $\sup_{x\in\cX} \tnorm{ \nFf{t}(x) -  \nFf{t-1}(x)}^2 \leq \epsilon$. We refer to the former as the \emph{average distribution shift} case, and to the latter as the \emph{bounded distribution shift} case. 
 
For strongly convex functions, Theorem~\ref{theorem:scRegretbound} directly yields the regret bound
  \begin{align*}
 \Exp{R_T(u)}  \leq O\!\left(  \frac{1}{\mu}(\sigma^2_{\max}  +\epsilon)\log T+ D^2 L \kappa \log \kappa  \right).
 \end{align*}
 For the considerably weaker assumption of an average distribution shift, we have   
 \begin{align*}
  \sum_{t=1}^T \frac{1}{t\mu}\sup_{x\in\cX}\norm{\nFf{t}(x) - \nFf{{t-1}}(x)}^2 \leq \Sigma_{[1:T]}^{(2)} \sqrt{ \sum_{t=1}^T \frac{1}{t^2 \mu^2}}     \leq \frac{4}{\mu} T\epsilon.
 \end{align*}
   To obtain the first inequality, we have used the Cauchy-Schwarz inequality together with the fact that $\sqrt{a+b} \leq \sqrt{a} + \sqrt{b}$, and the second inequality follows directly from the definition of the averaged distribution shift. Now suppose $\epsilon \leq 1/T$, then we obtain the following regret bound in case of average distribution shift.
  \begin{equation*}
    \Exp{R_T(u)}  \leq O \! \left(  \frac{\sigma^2_{\max}}{\mu}  \log T+ \frac{ 1}{\mu}   + D^2 L \kappa \log \kappa  \right).
  \end{equation*} 
 Since 
 $\Sigma^{(2)}_{[1:T]} \leq T\epsilon$ for the average distribution shift, for convex and smooth functions, Theorem~\ref{thm:VarStepsize} entails that
\begin{align*}
\Exp{R_T(u) }  \leq O\left( D(\sigma_{\max} + \sqrt{\epsilon})\sqrt{T  }  +   DG+ LD^2 \right).
\end{align*}
 
 \paragraph{Distribution switch: }  SEA switches $c$ times between distributions  
  $\cD_1, \dots, \cD_c \in \dD$. These switches can happen at any round  and the learner does not know when a switch occurs. In this case, we can upper bound $\smash{\Sigma^{(2)}_{[1:T]} \leq \Sigma^2_{\max} c}$. 
  Thus, for strongly convex functions Theorem~\ref{theorem:scRegretbound} directly yields
 \begin{equation*}
 \Exp{R_T(u)}  \leq O\left(  \frac{1}{\mu} \left( \sigma^2_{\max} \log T +\Sigma^2_{\max} \log c \right) + D^2 L \kappa \log \kappa  \right).
 \end{equation*}
 And for convex smooth functions Theorem \ref{thm:VarStepsize} gives
\begin{equation*}
\Exp{R_T(u)} \leq O\!\left( D\sigma_{\max}\sqrt{T} + D\Sigma_{\max} \sqrt{c}  +  DG + LD^2 \right).
\end{equation*}

\end{document}